\newtheorem{thm}{Theorem}[section]
\newtheorem{lemma}[thm]{Lemma}
\newtheorem{proposition}[thm]{Proposition}
\theoremstyle{definition}
\newtheorem{definition}[thm]{Definition}
\DeclareMathOperator{\E}{\mathbb{E}}
\DeclareMathOperator{\bsigma}{\boldsymbol{\sigma}}
\renewcommand{\arraystretch}{2}  
\def\mc{\ensuremath\mathcal}
\newcommand\numberthis{\addtocounter{equation}{1}\tag{\theequation}}
\title{Metritocracy:\\ Representative Metrics for Lite Benchmarks\thanks{This work was partially supported by the National Science Foundation under grants IIS-2147187 and IIS-2229881; by the Office of Naval Research under grants N00014-24-1-2704 and N00014-25-1-2153; and by a grant from the Cooperative AI Foundation. Schiffer was supported by an NSF Graduate Research Fellowship. Zhang was supported by an NSF Graduate Research Fellowship.}}
\author{
    Ariel D. Procaccia\thanks{Paulson School of Engineering and Applied Sciences, Harvard University | \emph{E-mail}: \href{mailto:arielpro@seas.harvard.edu}{arielpro@seas.harvard.edu}.}
	\and
	Benjamin Schiffer\thanks{Department of Statistics, Harvard University | \emph{E-mail}: \href{mailto:bschiffer1@g.harvard.edu}{bschiffer1@g.harvard.edu}.}
    \and
	Serena Wang\thanks{Paulson School of Engineering and Applied Sciences, Harvard University | \emph{E-mail}: \href{mailto:serenalwang@gmail.com}{serenalwang@g.harvard.edu}.}
    \and
	Shirley Zhang\thanks{Paulson School of Engineering and Applied Sciences, Harvard University | \emph{E-mail}: \href{mailto:szhang2@g.harvard.edu}{szhang2@g.harvard.edu}.}
}
\begin{document}

\begin{titlepage}
\maketitle

\setcounter{page}{0}
\thispagestyle{empty}

\begin{abstract}

A common problem in LLM evaluation is how to choose a subset of metrics from a full suite of possible metrics. Subset selection is usually done for efficiency or interpretability reasons, and the goal is often to select a ``representative'' subset of metrics. However, ``representative'' is rarely clearly defined. In this work, we use ideas from social choice theory to formalize two notions of representation for the selection of a subset of evaluation metrics. We first introduce \emph{positional representation}, which guarantees every alternative is sufficiently represented at every position cutoff. We then introduce \emph{positional proportionality}, which guarantees no alternative is proportionally over- or under-represented by more than a small error at any position. We prove upper and lower bounds on the smallest number of metrics needed to guarantee either of these properties in the worst case. We also study a generalized form of each property that allows for additional input on groups of metrics that must be represented. Finally, we tie theory to practice through real-world case studies on both LLM evaluation and hospital quality evaluation.

\end{abstract}

\end{titlepage}

\section{Introduction}

The last few years have seen an explosion in metrics to evaluate large language models (LLMs). While this has improved our ability to understand the capabilities of LLMs, it is also increasingly computationally expensive to evaluate all of these measures. This challenge is directly felt by platforms like BIG-bench \citep{srivastava2022beyond} and HELM \citep{liang2022holistic} that aggregate numerous metrics to provide as complete a picture as possible of LLM performance. 

A common approach that evaluation platforms take to mitigate these growing computational difficulties is to create a ``lite'' version of the full evaluation suite, which consists of a subset of the original measures. For example, BIG-bench Lite contains a subset of 24 JSON metrics from the full collection of over 200 metrics, which is ``designed to provide a canonical measure of model performance, while being far cheaper to evaluate than the full set.'' HELM Lite also contains a subset of scenarios from HELM Classic (in addition to some others), constructed to have a lighter computational overhead.

The problem of selecting a subset of evaluation metrics is actually quite general beyond LLM evaluation, and is also common in public policy and business operations. For example, Cal Hospital Compare awards Patient Safety Honor Roll status to hospitals using a subset of 12 measures from a full set of hundreds of hospital quality measures collected by the Centers for Medicare and Medicaid Services \citep{calhospitalcompare}. This subset is carefully hand-selected, but Cal Hospital Compare still acknowledges that ``measurement of patient safety is complex and there is no single validated method for measuring the overall safety of care provided in a given health care setting.'' Beyond computational considerations, an additional reason for selecting a subset of metrics is understandability for stakeholders.

Across these settings, a recurring theme is that practitioners want to select a subset of metrics that is in some sense ``representative'' of the underlying full set of metrics.\footnote{Not all ``Lite'' benchmarks are trying to be representative\,---\,SWE-bench Lite \citep{jimenez2023swe} also tries to include easier metrics. In this work we focus on settings where the goal is to be representative.} However, there is no formal notion of representation that is common across these contexts. To provide the tools to more clearly discuss and achieve representation in the selection of a subset of evaluation metrics, we introduce formal definitions of representation inspired by notions in computational social choice. Our theoretical framework provides the basis for discussing tradeoffs between representation and computational cost or understandability, and allows practitioners to more clearly reason about the consequences of subset selection on downstream decision-making. Our framework also opens the door to algorithmic support for metric selection, which we characterize theoretically and empirically.

\subsection{Our Contributions}

We study the problem of selecting a representative subset from a set of $n$ evaluation metrics that each rank $m$ alternatives.\footnote{ A \textit{metric} refers to anything that gives a ranking over alternatives (and not necessarily a loss function).}
We introduce two desirable properties for the subset and provide lower and upper bounds on the number of metrics needed to satisfy each of the two properties. 

We start by defining positional representation, which prevents under-representation. Positional representation guarantees that every alternative is sufficiently represented at every rank. This property is parameterized by a group size $g$, which indicates the granularity of representation. We show upper and lower bounds on the number of metrics necessary to satisfy positional representation in the worst case that are tight up to a logarithmic factor.
We also provide a polynomial-time algorithm which always finds a subset satisfying positional representation with size at most $\tfrac{n}{g}\log(m)$.

We next introduce positional proportionality, which guarantees that no alternative is under- or over-represented at any position in the chosen subset by more than an additive factor of $\epsilon$. We give tight (up to constant factors) upper and lower bounds on the number of metrics needed to satisfy positional proportionality in the worst case. We also show that any subset of metrics satisfying positional proportionality can approximate any social choice scoring rule on the original set of metrics. 

Finally, we generalize both properties to enable preserving information about the original set of metrics which may be external to the rank information. We show that our upper and lower bound extend to the general versions of our properties, and prove that finding the smallest set satisfying either general property is NP-hard. Our theoretical results are summarized in Table \ref{table:theory_results}. 

\begin{table}[h]
\centering
\begin{tabular}{l@{\hskip 3pt}|@{\hskip 7pt}c@{\hskip 7pt}c@{\hskip 7pt}c@{\hskip 5.5pt}c}
\textbf{Properties} & \textbf{Parameter} & \textbf{Upper Bound} & \textbf{Lower Bound} & \makecell{\textbf{Complexity}}\\
\hline
\textbf{Positional Representation} & Group size $g$ & $O\left(\frac{n}{g}\log(m)\right)$ & $\Omega\left(\frac{\frac{n}{g}\log(m)}{\log(\frac{n}{g}\log(m))}\right)$  & NP-hard \\
\textbf{Positional Proportionality} & Accuracy $\epsilon$ & $O\left(\frac{1}{\epsilon^2}\log(m) \right)$ & $\Omega\left(\frac{1}{\epsilon^2}\log(m)\right)$ & NP-hard \\
\end{tabular}
\medskip
\caption{Summary of theoretical results.}
\label{table:theory_results}
\end{table}

To connect these theoretical results to the above motivating practical examples, we evaluated algorithms to achieve positional representation and positional proportionality in three case studies with real data: two on LLM evaluation and one on hospital quality evaluation. We show that the outputs of our methods compare favorably with the existing subsets currently deployed in the real world for each of these settings.

\subsection{Related Work}

There has been a surge of recent work studying benchmarks as voters from the social choice perspective. These works differ from ours in that none study subset selection of metrics. Many of these works study how metrics should be aggregated, particularly to improve robustness \citep{colombo2022infolm,colombo2022best,peyrard2017learning,mishra2021robust, himmi2023towards}. \citet{colombo2022best} propose using Borda count to aggregate benchmarks as an approximation of the Kemeny Rule and \cite{rofin2022vote} propose VOTE'N'RANK, which consists of several scoring rules based on social choice theory for benchmark aggregation.  In a different direction, \citet{zhang2024inherent} give a form of Arrow's Impossibility result for the benchmark setting and highlight an inherent tradeoffs between sensitivity and diversity.

In social choice, our properties are most closely related to Justified Representation (JR) introduced by \citet{aziz2017justified} in the committee selection setting. JR guarantees representation for sufficiently-large ``coalitions'' of voters that approve the same candidate; it is similar to our notion of positional representation, which guarantees representation for ``coalitions'' of benchmarks that all rank alternative $a$ in the top $r$. However, our setting has no voters and no notion of approval, which makes a direct comparison impossible. See Appendix \ref{app:related_works} for more discussion of the relationship to JR. Also in the approval setting, \cite{skowron2015fully} study the relationship between set cover and proportional representation with approval ballots, similar to our NP-hardness results in Section \ref{sec:generalized}.

Other methods have also been proposed to speed up LLM evaluation through selection of individual prompts from all possible evaluation metrics \citep{perlitz2023efficient,polo2024tinybenchmarks,li2024active}. Our work differs in that we restrict to selecting a subset of full metrics, and not the more granular selection of prompts. This captures more general public policy settings like hospital quality evaluation, where only full metrics are available. Note, however, that the two approaches are complementary and can be used in tandem.

\subsection{Model}

Let there be a set $N = [n]$ of \emph{metrics} and a set $A = [m]$ of \emph{alternatives}. Each metric $i$ has a \emph{ranking} $\sigma_i$ over alternatives. Let $\sigma_{ir}$ be the alternative ranked in position $r$ in metric $i$'s ranking, and let $\sigma_i(a)$ be the rank of alternative $a$ in metric $i$'s ranking. The set of all metric rankings forms a \emph{preference profile} $\bsigma_N = \{\sigma_1,\ldots, \sigma_n\}$. For $K \subseteq N$, define $\bsigma_K = \{\sigma_i : i \in K\}$. We study the following: 

\begin{quote}
    \emph{How should we select a small subset of metrics $K \subset N$ such that $K$ preserves some information from the metrics in $N$?}
\end{quote}

We say that metric $i$ ranks alternative $a$ at least at position $r$ if $\sigma_i(a)$ is $r$ or better (i.e. $\sigma_i(a) \le r$). Similarly, we say that metric $i$ ranks alternative $a$ above position $r$ if $\sigma_i(a)$ is strictly better than $r$  (i.e. $\sigma_i(a) < r$). Define  $C(N, r, a)$ as the number of metrics in $N$ that rank alternative $a$ in the top $r$. Likewise, for $K \subseteq N$, $C(K, r, a)$ is the number of metrics in $K$ that rank alternative $a$ in the top $r$.

\section{Positional Representation}

Consider some alternative $a \in A$. If $a$ is ranked highly by many metrics in $N$, then we want $a$ to be ranked highly in many metrics in the subset $K$ as well\,---\,otherwise, $a$ would not be getting the representation it deserves in $K$. Intuitively, it would be undesirable if $a$ is ranked in the top $10$ positions by the majority of the metrics in $N$, but does not appear in the top $10$ positions for any metric in $K$. Similarly, it would be undesirable if $b$ is ranked in the top $50$ by $90\%$ of metrics in $N$, but $b$ is ranked in the top $50$ by less than half of the metrics in $K$. We therefore begin by introducing positional representation, which guarantees that the subset $K$ gives every alternative $a \in A$ sufficient representation at \emph{every} cutoff position. More specifically, positional representation guarantees that for every position cutoff, if $a$ is ranked above the cutoff in a sufficiently large number of the original metrics, then that alternative is also ranked above the cutoff in a (close to) proportional number of metrics in $K$. 
\smallskip
\begin{definition}\label{def:pos_rep}
    A subset $K$ satisfies \emph{positional representation} for group size $g$ if for every $r \in [1:m]$, any alternative that is ranked in the first $r$ positions in at least $\ell \cdot g$ metrics is ranked in the first $r$ positions in at least $\ell$ metrics in $K$. Equivalently, for all $r \in [1:m]$ and all $a \in A$, 
    \begin{equation}\label{eq:pos_rep}
        C(K,r,a) \ge \left \lfloor \frac{C(N,r,a)}{g} \right \rfloor.
    \end{equation}
\end{definition}

Positional representation is parameterized by a group size $g$, which will capture the tradeoff between the granularity of representation and the size of $K$ needed to satisfy positional representation. As $g$ gets larger, the minimum necessary $|K|$ decreases, but for each alternative $a$ it takes more high-ranked votes to deserve representation. If $g = n$, for instance, then positional representation guarantees only that every alternative is ranked by some metric in $K$ at least as high as its lowest ranking. By the pigeonhole principle, it will always be possible to satisfy this specific guarantee with $|K| = 1$. At the other extreme, if $g = 1$, then we must have that $K = N$ in order to satisfy positional representation.

As a concrete example, suppose that we have $n = 100$ metrics and $g = 10$. If alternative $a$ is ranked first by exactly $23$ metrics in $N$, then Definition \ref{def:pos_rep} requires that $a$ is ranked first by at least two metrics in $K$. Similarly, if alternative $a$ is ranked in one of the top two places by exactly $76$ metrics in $N$, then Definition \ref{def:pos_rep} also requires that alternative $a$ is ranked in the top two by at least $7$ metrics in $K$. 

\subsection{Lower Bound}

In this section, we first provide a lower bound on the size of $K$ needed to satisfy positional representation. We then give an algorithm that returns a solution satisfying positional representation where $|K|$ is at most a logarithmic factor larger than the lower bound. 

Observe that any $K$ satisfying positional representation for group size $g$ must have size $|K| \ge \lfloor \frac{n}{g} \rfloor$. This is because every alternative $a$ must be ranked in the top $m$ by all $n$ metrics, and therefore Definition \ref{def:pos_rep} requires that $a$ is ranked in the top $m$ by at least $\lfloor \frac{n}{g} \rfloor$ metrics in $K$. Naturally, we might hope that for any $g$ and $N$, we can always find $K \subseteq N$ such that $K$ satisfies positional representation for group size $g$ and $|K| = \lfloor \frac{n}{g} \rfloor $. Unfortunately, we show this is impossible in the following example.
\renewcommand{\arraystretch}{1}

\begin{table}[ht!]
\centering
\begin{tabular}{|c|c|c|c|}
\hline
\textbf{$b_1$} & \textbf{$b_2$} & \textbf{$b_3$} & \textbf{$b_4$} \\ \hline
\cellcolor[HTML]{FBF8CC} $x$ & \cellcolor[HTML]{FBF8CC} $x$ & \cellcolor[HTML]{F1C0E8} $w$ & \cellcolor[HTML]{F1C0E8} $w$ \\ \hline
\cellcolor[HTML]{9AB7D3} $y$ & \cellcolor[HTML]{B9FBC0} $z$ & \cellcolor[HTML]{9AB7D3} $y$ & \cellcolor[HTML]{B9FBC0} $z$ \\ \hline
\cellcolor[HTML]{FDE4CF} $u$ & \cellcolor[HTML]{CFBAF0} $v$ & \cellcolor[HTML]{CFBAF0} $v$ & \cellcolor[HTML]{FDE4CF} $u$ \\ \hline
\cellcolor[HTML]{8EECF5} $z$ & \cellcolor[HTML]{FFCFD2} $y$ & \cellcolor[HTML]{8EECF5} $z$ & \cellcolor[HTML]{FFCFD2} $y$ \\ \hline
\cellcolor[HTML]{A3C4F3} $v$ & \cellcolor[HTML]{CEB396} $u$ & \cellcolor[HTML]{CEB396} $u$ & \cellcolor[HTML]{A3C4F3} $v$ \\ \hline
\cellcolor[HTML]{DED6CE} $w$ & \cellcolor[HTML]{DED6CE} $w$ & \cellcolor[HTML]{98F5E1} $x$ & \cellcolor[HTML]{98F5E1} $x$ \\ \hline 
\end{tabular}
\medskip
\caption{Example where positional representation for group size $g=2$ is impossible with $|K| = \frac{n}{g}$. Each metric in $\{b_1,b_2,b_3,b_4\}$ has preference ordering among the alternatives $\{u,v,w,x,y,z\}$ corresponding to that metric's column. Each color needs to be represented in $K$.}\label{tbl:metric_colors}
\end{table}
In this example, a set $K$ satisfies positional representation for group size $g = 2$ if and only if every color appears in $K$. However, there is no such subset $K\subset\{b_1,b_2,b_3,b_4\}$ where $|K| \le 2 = \frac{n}{g}$, and therefore any $K$ satisfying positional representation for $g=2$ must have $|K| \ge 3$. More generally, we show the following worst-case lower bound on the number of metrics needed to guarantee positional representation.
\smallskip
\begin{thm}[Proof in Appendix \ref{proof:thm:pos_rep_lower_bound}]\label{thm:pos_rep_lower_bound}
    For every $g \ge 2$, there exists $\bsigma_N$ such that no subset $K \subseteq N$ satisfies positional representation for group size $g$ with size $|K| \le \Omega\left(\frac{\frac{n}{g}\log(m)}{\log\left(\frac{n}{g}\log(m)\right)}\right)$.
\end{thm}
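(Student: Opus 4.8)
The plan is to use the probabilistic method: I will exhibit a single position cutoff $r$ at which, with positive probability over a random profile, \emph{every} small subset $K$ violates the representation constraint of Definition~\ref{def:pos_rep} for some alternative. Concretely, let $\bsigma_N$ consist of $n$ independent, uniformly random permutations of $A=[m]$, and write $p := n/g$. Fix the cutoff $r = \lceil \kappa m / p\rceil$ for a suitable constant $\kappa$, so that the expected number of metrics ranking any fixed alternative in the top $r$ is $\E[C(N,r,a)] = nr/m \approx \kappa g$. This choice serves two purposes: it makes $\lfloor C(N,r,a)/g\rfloor \ge 1$ for a constant fraction (indeed nearly all) of the alternatives, so these alternatives genuinely demand representation; and it makes each metric's top-$r$ set a uniform $r$-subset of $A$ of size $r \approx \kappa m/p \ll m$, so that covering all demanding alternatives at cutoff $r$ behaves like a coupon-collector problem.

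Next I reduce validity to a covering statement at this single cutoff. Call $a$ \emph{demanding} if $C(N,r,a)\ge g$, and let $B(K)$ be the set of alternatives not ranked in the top $r$ by any metric of $K$. If $K$ satisfies positional representation then, at cutoff $r$, no demanding alternative can lie in $B(K)$, since such an alternative would have $C(K,r,a)=0<1\le\lfloor C(N,r,a)/g\rfloor$. Crucially, for $a\in B(K)$ the count $C(N,r,a)$ is generated entirely by the $n-|K|$ metrics outside $K$, which are independent of the permutations determining $B(K)$. Hence, conditioning on $B(K)$, each $a\in B(K)$ fails to be demanding with probability at most $\rho:=\Pr[\mathrm{Bin}(n-|K|,r/m)<g]$, and for $g\ge 2$ and $\kappa$ a large enough constant one has $\rho$ bounded away from $1$ (as long as $|K|\ll n$, so the mean stays near $\kappa g$). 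Using negative association of the relevant indicators (within each permutation the events $\{a\in\text{top-}r\}$ sum to exactly $r$), I can bound, for a fixed $K$ with $|K|=s$,
\begin{equation*}
\Pr[K\text{ valid at cutoff }r]\;\le\;\E\!\left[\rho^{|B(K)|}\right]\;\le\;\Big(1-(1-\rho)(1-r/m)^{s}\Big)^{m}\;\le\;\exp\!\Big(-c\,m\,e^{-sr/m}\Big),
\end{equation*}
where $c=1-\rho>0$ is a constant.

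Finally I take a union bound over all candidate subsets. Since $\Pr[\exists\,K,\,|K|=s:\,K\text{ valid}]\le \binom{n}{s}\exp(-c\,m\,e^{-sr/m})$, the profile admits no valid $K$ of size $s$ as soon as $c\,m\,e^{-sr/m}>\ln\binom{n}{s}\approx s\log(n/s)$. Plugging in $r\approx\kappa m/p$ turns this into the condition $c\,m\,e^{-\kappa s/p}\gtrsim s\log(n/s)$, and checking it shows that every $s$ up to $\Omega\!\big(\tfrac{p\log m}{\log(p\log m)}\big)$ is admissible; the denominator $\log(p\log m)$ is exactly the cost $\log\binom{n}{s}$ of the union bound. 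By the probabilistic method there is a fixed profile $\bsigma_N$ for which no $K$ of size below this threshold satisfies positional representation, which is the claim.

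The main obstacle is controlling the dependencies so the concentration is airtight: the same random permutations determine both which alternatives are demanding and which are covered by a given $K$, so I must split the randomness into the metrics inside versus outside $K$ and invoke negative association (or a second-moment substitute) to legitimately multiply the per-alternative failure probabilities. A secondary technical point is the small-$g$ regime (e.g.\ $g=2$), where I cannot rely on strong concentration of each individual $C(N,r,a)$; there I instead argue only that a constant fraction of alternatives are demanding and that covering that many uniform $r$-subsets already forces $s=\Omega(p\log(m/p))$, which in turn implies the stated bound.
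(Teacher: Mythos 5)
Your approach is genuinely different from the paper's. The paper's proof is a short explicit construction: it enumerates all $\binom{n}{g}$ size-$g$ subsets $G_1,\dots,G_{\binom{n}{g}}$ of $N$, dedicates rank $r$ to a fresh pair $\{a_r,b_r\}$ with $a_r$ placed at position $r$ by exactly the metrics in $G_r$, so that any valid $K$ must intersect every size-$g$ subset and hence $|K|\ge n-g+1$; the stated form of the bound is then pure arithmetic after setting $m=2\binom{n}{g}$. You instead use random permutations, a single cutoff $r\approx \kappa m/(n/g)$, a coupon-collector covering argument, and a union bound over subsets. Your technical core is sound: splitting the randomness into $\sigma_K$ versus $\sigma_{N\setminus K}$ is the right way to decouple ``demanding'' from ``uncovered,'' and negative association of the top-$r$ indicators within each permutation does legitimately give both $\Pr[\forall a\in B(K):\,a\text{ non-demanding}\mid \sigma_K]\le\rho^{|B(K)|}$ and $\E[\rho^{|B(K)|}]\le\prod_a\E[\rho^{X_a}]$.

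The gap is in the final sentence of your third paragraph, where the threshold is asserted rather than checked, and the check is not free. Two constraints are hiding there. First, your constant $c=1-\rho$ is only bounded away from $0$ while $(n-|K|)\,r/m\ge g$, i.e.\ while $|K|\le (1-1/\kappa)n$; so your argument can never certify a lower bound beyond $\Theta(n)$, and for $s$ approaching $n$ the bound $\exp(-c\,m\,e^{-sr/m})$ silently degrades. Second, the target quantity $\frac{(n/g)\log m}{\log((n/g)\log m)}$ exceeds $n$ for some pairs $(n,m)$ (take $m$ superexponential in $n$), so the claim cannot hold for arbitrary $n,m$ and you must, exactly as the paper does, \emph{choose} the relationship between $m$ and $n$ (e.g.\ $m\approx\binom{n}{g}$, equivalently $\log m\approx g\log(n/g)$) so that the target is $\Theta(n)$ and your capped bound of $\Theta(n)$ matches it up to constants. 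Once you fix such parameters, the union-bound inequality $c\,m\,e^{-\kappa s/p}>s\log(n/s)$ does verify at $s=\Theta(n)$, so the proof closes; but as written you have not selected parameters or run that verification, and in some regimes (small $n/g$, where $r/m$ is not small) the approximation $(1-r/m)^s\approx e^{-sr/m}$ also needs care. As a compensating virtue, your construction is more flexible than the paper's: in regimes where $g\gtrsim\log m$ and $n/g\le m^{1-\delta}$ the same calculation yields $\Omega\bigl(\frac{n}{g}\log m\bigr)$ with no logarithmic loss, which is stronger than the theorem and matches the greedy upper bound.
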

\vspace{-10pt}
\begin{proof}[Proof Sketch]
    We will construct a preference profile $\bsigma_N$ where $K$ must be large to satisfy positional representation. First, we enumerate all possible subsets of $N$ of size $g$ as $\{G_1,...,G_{\binom{n}{g}}\}$. We then construct a preference profile $\bsigma_N$ such that $\sigma_{ir} = a_r$ if $i \in G_r$ and $\sigma_{ir} = b_r$ if $i \not\in G_r$ where $a_r,b_r$ are distinct alternatives for all $r \in \binom{n}{g}$. By this construction, for every $r \le \binom{n}{g}$, a subset $K$ satisfying Equation \eqref{eq:pos_prop} for alternative $a_r$ must include at least one metric from $G_r$. Therefore, $K$ must include at least one metric from every subset of size $g$ of $N$, which means $K$ must have size at least $n-g+1$. We then show that $n-g+1$ satisfies the desired bound. 
\end{proof}
\vspace{-10pt}
Although achieving $|K| = \lfloor n/g \rfloor$ is not always possible, we still want to efficiently find a $K$ that satisfies positional representation for group size $g$ and contains relatively few metrics. We next present Algorithm \ref{algo:greedy_pseudocode}, a polynomial time greedy algorithm that finds such a $K$ with $|K| \le \frac{n}{g}\log(m)$. 

\subsection{Algorithm}

We first give a high-level overview of the algorithm. The algorithm iterates through every element of the preference profile row by row. As it does so, it keeps track of how many times each alternative $j$ has shown up. Whenever $j$ has shown up $g$ times, the algorithm colors the last $g$ entries of $j$ with a new color and resets the counter for alternative $j$. Table \ref{tbl:metric_colors} provides an example of the coloring at the end of this procedure. Note that if $n/g$ is not integral, not all of the elements will be colored.

After completing this process, the algorithm greedily selects metrics to include in the subset $K$ based on the number of colored alternatives in each metric’s column that are not included in a previously selected metric. Specifically, the algorithm will select the first metric from the set of metrics that have the most colored elements. The second metric is selected from the set of metrics that have the most \emph{new} colors, and so on. This process continues until there are no new colors remaining among the unselected metrics, at which point the algorithm returns the set of selected metrics.

\begin{algorithm}
\caption{Greedy (pseudo-code)}\label{algo:greedy_pseudocode}
\begin{algorithmic}[1]
\REQUIRE Preference profile $\bsigma_N$, group size $g$
\WHILE{there exist alternatives with at least $g$ uncolored instances}
    \STATE Choose an alternative $a$ with at least $g$ uncolored instances
    \STATE Color the highest $g$ uncolored instances of $a$ with a new color (breaking ties arbitrarily)
\ENDWHILE
\STATE Initialize $K \gets \emptyset$
\STATE Let $C$ be the set of colors used
\WHILE{$C$ is nonempty} \label{line:whileloop_pseudo}
    \STATE Choose a metric $i \in N \setminus K$ that covers the most colors in $C$
    \STATE Add $i$ to $K$
    \STATE Remove the colors that $i$ covers from $C$
\ENDWHILE
\RETURN $K$
\end{algorithmic}
\end{algorithm}

The full algorithm is presented in Appendix \ref{app:greedy_alg}. Theorem \ref{thm:greedy} gives the formal bound for Algorithm \ref{algo:greedy_pseudocode}.
\smallskip
\begin{thm}[Proof in Appendix \ref{proof:thm:greedy}]\label{thm:greedy}
    For any preference profile $\bsigma_N$ and any group size $g$, Algorithm \ref{algo:greedy_pseudocode} terminates in polynomial time and returns a subset $K$ with $|K| \le O(\frac{n}{g}\log(m))$ which satisfies positional representation for group size $g$ .
\end{thm}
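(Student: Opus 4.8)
The plan is to establish the three assertions of the theorem separately: polynomial running time, that the output $K$ satisfies positional representation, and that $|K| = O(\tfrac{n}{g}\log m)$. Polynomial time is the least delicate point. The coloring loop colors $g$ fresh instances per iteration and so runs at most $nm/g$ times, while the covering loop removes at least one color per iteration and so runs at most once per color; each iteration only needs to scan the $n\times m$ profile a constant number of times to locate an alternative with $g$ uncolored instances or the metric covering the most colors. The substance of the argument lies in correctness and in the size bound.

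For correctness, I would exploit the fact that the colors of a fixed alternative $a$ are always built from its best-ranked instances first: the first color of $a$ consists of the $g$ metrics ranking $a$ highest, the second of the next $g$, and so on, independently of the order in which alternatives happen to be processed. Fix a cutoff $r$ and an alternative $a$, and write $c = C(N,r,a)$. The $c$ metrics that rank $a$ in the top $r$ are precisely the $c$ best-ranked instances of $a$, so the first $\lfloor c/g\rfloor$ colors of $a$ are composed entirely of instances at rank $\le r$. Since the covering loop terminates only once every color has been covered, $K$ contains, for each such color, a metric holding an instance of that color, and that metric therefore ranks $a$ at a position $\le r$. Because a single metric contains exactly one instance of $a$ and hence belongs to at most one color of $a$, these $\lfloor c/g\rfloor$ colors must be covered by $\lfloor c/g\rfloor$ \emph{distinct} metrics of $K$. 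This gives $C(K,r,a) \ge \lfloor c/g\rfloor = \lfloor C(N,r,a)/g\rfloor$, which is exactly the defining inequality \eqref{eq:pos_rep} of positional representation.

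For the size bound I would run a greedy set-cover averaging argument, taking the colors as the universe and letting each metric cover the colors appearing in its column. The key structural fact is that every still-uncovered color has all $g$ of its instances in unselected metrics (had a selected metric held one, the color would already be covered); thus if $c$ colors remain, the color–metric incidence count is exactly $gc$ distributed over at most $n$ candidate metrics, so the greedy choice covers at least $gc/n$ of them and shrinks the remaining count by a factor $(1-g/n)$. The main obstacle is that the total number of colors can be as large as $m\lfloor n/g\rfloor$, and naively iterating this decay until no colors remain costs $O(\tfrac{n}{g}\log(mn/g))$ steps, which exceeds the claimed bound. I would resolve this with a two-phase analysis: first apply the $(1-g/n)$ decay only until the number of remaining colors drops below $n/g$, which happens once $t \ge \tfrac{n}{g}\ln m$ since $(mn/g)(1-g/n)^{t} \le (mn/g)e^{-gt/n} \le n/g$ at that point; then note that each of the fewer than $n/g$ surviving colors is eliminated by at least one further greedy step, costing at most $n/g$ additional steps. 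Summing the two phases yields $|K| \le \tfrac{n}{g}\ln m + \tfrac{n}{g} = O(\tfrac{n}{g}\log m)$, completing the proof.
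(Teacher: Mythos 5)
Your proof is correct and follows essentially the same route as the paper's: the same coloring-based averaging argument giving the $(1-g/n)$ decay, followed by the same two-phase analysis (geometric decay until at most $n/g$ colors remain, then one color per step). The only difference is that you also spell out explicitly why the output satisfies positional representation (via the observation that the first $\lfloor C(N,r,a)/g\rfloor$ colors of $a$ are built from its best-ranked instances), a step the paper's appendix proof leaves implicit; that addition is correct and welcome.
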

\vspace{-8pt}
\begin{proof}[Proof sketch]
 The key idea of the proof is to keep track of the number of distinct colors that $K$ does not yet cover after iteration $t$ of the loop on Line \ref{line:whileloop_pseudo} of Algorithm \ref{algo:greedy_pseudocode}. Denote this quantity $Q_t$. By construction, the number of colors at the beginning of the loop is $Q_0 \le \frac{mn}{g}$. The algorithm terminates at the smallest time $t$ where $Q_t = 0$. We first show that for each round of the loop, the metric $i \in N \setminus K$ that covers the most remaining colors in $C$ must cover at least $\frac{Q_t}{n/g}$ colors. Therefore, for all $t$, we have $Q_{t+1} \le Q_t\left(1 - \frac{g}{n}\right)$. We also know that if $Q_t \le n/g$, then $Q_{t+1} \le Q_t - 1$. Combining these two  equations, we show the desired result that $Q_t = 0$ for $t \ge \left(n/g + 1\right)\log(m)$. 
\end{proof}
\vspace{-10pt}
Because any $K$ satisfying positional representation must have size at least $\lfloor n/g \rfloor$, Algorithm \ref{algo:greedy_pseudocode} selects no more than a $\log(m)$ factor more metrics than the smallest number needed to satisfy positional representation, For any $\bsigma_N$. For a given $\bsigma_N$, we can find the minimum number of metrics needed to satisfy positional representation using an integer program (see Appendix \ref{app:posprop_lp}); however this is not guaranteed to run in polynomial time.

\section{Positional proportionality}\label{sec:positional_proportionality}

While positional representation guarantees that each alternative gets sufficient representation in the subset of metrics chosen for each position cutoff, it does not prevent over-representation. For example, if an alternative $a$ is ranked in the top $10$ in $\bsigma_N$ exactly $g$ times, then in order to satisfy positional representation with parameter $g$, $a$ must be ranked in the top $10$ in $\bsigma_K$ at least once. However, there's no upper bound on how many times $a$ can be ranked in the top $10$\,---\,it could be possible to satisfy positional representation for this instance and have $a$ ranked in the top $10$ by every metric in $K$. 

In this section, we define a notion of proportionality which prevents both under-representation and over-representation. This notion, positional proportionality, is especially useful for recovering summary information such as the fraction of metrics which rank an alternative in the top half. We will show that if $K$ satisfies positional proportionality for $\bsigma_N$, then any positional scoring rule evaluated on $\bsigma_K$ is a good approximation for the same scoring rule evaluated on $\bsigma_N$. 

Informally, positional proportionality guarantees that for every alternative and position cutoff, $K$ preserves the fraction of times that alternative is ranked above that position cutoff within an additive error $\epsilon$. From this guarantee, we can also recover the fraction of times that each alternative is ranked at each specific position within an additive error. The formal definition for positional proportionality follows below. 
\smallskip
\begin{definition}\label{def:pos_prop}
    A subset $K$ satisfies \emph{$\epsilon$-positional proportionality} for $\epsilon \ge 0$ if for every alternative $a\in A$ and every $r \in [m]$, the fraction of metrics that rank $a$ in the top $r$ in $N$ is within $\epsilon$ of the fraction of metrics that rank $a$ in the top $r$ in $K$. Formally, for all $a \in A$ and $r \in [m]$,
    \begin{equation}\label{eq:pos_prop}
        \left|\frac{C(N, r, a)}{|N|} - \frac{C(K, r, a)}{|K|}\right| \le \epsilon.
    \end{equation}
\end{definition}

Note that, unlike positional representation, positional proportionality is not parameterized by a group size $g$, but rather by an error $\epsilon$. As in positional representation, the choice of $\epsilon$ trades off the accuracy guarantee for proportionality with the minimum size of $K$ necessary. As $\epsilon$ increases, the size $|K|$ decreases, but the error in how well $K$ captures $N$ for each alternative and position cutoff may increase. As a concrete example of positional proportionality, suppose that we have $n = 100$ metrics and use parameter $\epsilon = \tfrac{1}{25}$. Further suppose that $a$ is ranked first by exactly $20$ metrics in $N$. Then Definition \ref{def:pos_prop} requires that the fraction of metrics in $K$ that rank $a$ first is between $\tfrac{4}{25}$ and $\tfrac{6}{25}$. Definition \ref{def:pos_prop} further requires that approximate proportionality holds for every alternative at every position cutoff.

While both positional representation and positional proportionality enforce ways that $K$ must be representative of $N$, neither property implies the other, and neither is weakly easier to satisfy. In particular, positional representation strictly prevents under-representation, while positional proportionality approximately prevents both under-representation and over-representation. In the worst-case (and, we expect, in the typical case), the minimum number of metrics needed to satisfy positional representation with parameter $g$ is less than the minimum number of metrics needed to satisfy positional proportionality with $\epsilon = g/n$. However, this is not always the case. For instance, suppose that every metric in $N$ has the exact same ranking over alternatives. Then positional proportionality can be satisfied with $|K| = 1$  for any $\epsilon \geq 0$ by choosing an arbitrary metric. However, for any $g \leq |N|/2$, positional representation cannot be satisfied with less than $|K| = 2$. Intuitively, this is because positional proportionality gives a fractional guarantee, while positional representation gives an absolute guarantee, which sometimes allows positional proportionality to be more efficient in its information aggregation. 

\subsection{Upper and Lower Bounds}

In this section, we present upper and lower bounds on the number of metrics necessary to guarantee positional proportionality. First, we show that for any preference profile $\bsigma_N$, there always exists a set $|K|$ with size $|K| = O\left(\frac{1}{\epsilon^2}\log(m)\right)$ that satisfies positional proportionality.
\smallskip
\begin{thm}[Proof in Appendix \ref{proof:thm:prob_method_pos_prop}]\label{thm:prob_method_pos_prop}
    For every preference profile $\bsigma_N$ and $\epsilon \ge 0$, there exists $K \subset N$ with $|K| \leq \frac{1}{\epsilon^2}\log(2m)$ that satisfies positional proportionality.
\end{thm}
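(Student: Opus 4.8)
The plan is to use the probabilistic method: I will show that a uniformly random subset of the prescribed size satisfies positional proportionality with positive probability, so such a subset must exist. Set $k = \left\lfloor \frac{1}{\epsilon^2}\log(2m)\right\rfloor$. If $\epsilon = 0$ or $k \ge n$, then $K = N$ already satisfies positional proportionality with error $0$ and $|K| = n \le \frac{1}{\epsilon^2}\log(2m)$, so I may assume $k < n$ and draw $K$ uniformly at random among all $k$-element subsets of $N$ (equivalently, sample $k$ metrics without replacement). This gives a genuine subset rather than a multiset, which matters since $C(K,r,a)$ counts distinct metrics.

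Fix an alternative $a \in A$ and a position $r \in [m]$, and let $X_i = \mathbf{1}[\sigma_i(a) \le r]$, so that $C(N,r,a) = \sum_{i \in N} X_i$ and $C(K,r,a) = \sum_{i \in K} X_i$. Since each metric is equally likely to appear in the random subset, symmetry gives $\E\!\left[\frac{C(K,r,a)}{k}\right] = \frac{C(N,r,a)}{n}$. The summands are $\{0,1\}$-valued, so Hoeffding's inequality — which holds for sampling without replacement from a finite population — yields, for this fixed pair $(a,r)$,
\begin{equation*}
\Pr\!\left[\left|\frac{C(K,r,a)}{k} - \frac{C(N,r,a)}{n}\right| > \epsilon\right] \le 2\,e^{-2k\epsilon^2}.
\end{equation*}
Next I take a union bound over all $m$ alternatives and all $m$ position cutoffs, i.e. over the at most $m^2$ pairs $(a,r)$. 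The probability that Equation \eqref{eq:pos_prop} is violated for some pair is then at most $2m^2 e^{-2k\epsilon^2}$.

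The crucial point is the arithmetic matching: a naive union bound appears to cost $\log(m^2) = 2\log m$ in the sample size, but the factor of $2$ in the exponent $2k\epsilon^2$ of Hoeffding's bound exactly absorbs the square in $m^2$. Plugging in $k = \frac{1}{\epsilon^2}\log(2m)$ gives $e^{-2k\epsilon^2} = (2m)^{-2} = \frac{1}{4m^2}$, so the union bound is at most $\frac{2m^2}{4m^2} = \frac12 < 1$. Hence with probability at least $\tfrac12$ the sampled $K$ satisfies positional proportionality, and a valid subset of size at most $\frac{1}{\epsilon^2}\log(2m)$ exists. Formally, the failure bound is below $1$ whenever $k > \frac{\log(2m^2)}{2\epsilon^2}$, which sits strictly below the claimed $\frac{1}{\epsilon^2}\log(2m)$ by a margin of $\frac{\log 2}{2\epsilon^2}$; this slack also absorbs the rounding of $k$ down to an integer.

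I expect the only thing requiring care is this bookkeeping of constants — ensuring the $m^2$ union bound is paid for by the exponent rather than inflating $\log(2m)$ to $\log(2m^2)$ — together with invoking the without-replacement version of Hoeffding's inequality so that the resulting $K$ is a bona fide subset. There is no conceptual obstacle beyond these points; the monotone structure of $r \mapsto C(\cdot,r,a)$ is not even needed, since brute-forcing the union bound over all $m^2$ pairs already fits within the target.
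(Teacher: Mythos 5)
Your proposal is correct and follows essentially the same route as the paper's proof: sample $\frac{1}{\epsilon^2}\log(2m)$ metrics uniformly without replacement, apply the without-replacement version of Hoeffding's inequality to each of the $m^2$ pairs $(a,r)$, and union bound to get failure probability at most $\tfrac12$. Your extra care with the $\epsilon=0$ and integrality edge cases is a minor refinement the paper omits, but the argument is identical in substance.
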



We next show that there exist preference profiles $\bsigma_N$ for which there is no $K$ with size less than $ \Omega(\frac{1}{\epsilon^2}\log(m))$ that satisfies positional proportionality. Importantly, this shows that the result of Theorem \ref{thm:prob_method_pos_prop} is tight up to constant factors.
\smallskip
\begin{thm}[Proof in Appendix \ref{proof:thm:pos_prop_lower_bound}]\label{thm:pos_prop_lower_bound}
    For any $\epsilon \le 1/24$, there exist $\bsigma_N$ such that no $K \subseteq N$ with $|K| \le \frac{1}{288\epsilon^2}\log(m)$ satisfies $\epsilon$-positional proportionality.
\end{thm}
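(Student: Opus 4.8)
The plan is to prove the lower bound via the probabilistic method: I will construct a random preference profile $\bsigma_N$ and show that with positive probability every small subset $K$ violates Equation~\eqref{eq:pos_prop} at some position/alternative pair. The construction pairs up the alternatives. Assume $m = 2t$ and group the alternatives into pairs $(c_1,d_1),\ldots,(c_t,d_t)$. Each metric $i$ places pair $j$ into the fixed block of positions $\{2j-1, 2j\}$, choosing the order of $c_j$ and $d_j$ within that block uniformly at random and independently across pairs and across metrics. This yields a genuine ranking over all $m$ alternatives, and the key point is that the indicators $Y_i^{(j)} := \mathbf{1}[\sigma_i(c_j) \le 2j-1] = \mathbf{1}[c_j \text{ beats } d_j \text{ in metric } i]$ are i.i.d.\ fair coins over all $(i,j)$, since $c_j$ lies in the top $2j-1$ positions if and only if it wins its own pair. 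Thus for the cutoff $r_j = 2j-1$ we have $C(N, r_j, c_j) = \sum_{i=1}^n Y_i^{(j)}$ and $C(K, r_j, c_j) = \sum_{i \in K} Y_i^{(j)}$, giving $t = m/2$ clean, mutually independent constraints.

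Next I would fix an arbitrary subset $K$ with $|K| = k \le k^\ast := \tfrac{1}{288\epsilon^2}\log m$ and bound the probability (over the random profile) that $K$ satisfies $\epsilon$-positional proportionality. For each pair $j$ define the deviation $D_j := \tfrac{1}{k}\sum_{i\in K} Y_i^{(j)} - \tfrac{1}{n}\sum_{i=1}^n Y_i^{(j)}$, and let $q := \Pr[\,|D_j| > \epsilon\,]$, which does not depend on $j$. Because $D_j$ depends only on the coins $\{Y_i^{(j)}\}_i$ belonging to pair $j$, and these coin-blocks are independent across $j$, the events $\{|D_j| \le \epsilon\}$ are independent. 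Hence
\begin{equation}
\Pr[\,K \text{ satisfies positional proportionality}\,] \le \Pr\Big[\,\textstyle\bigcap_{j=1}^t \{|D_j|\le\epsilon\}\,\Big] = (1-q)^t.
\end{equation}
This reduces everything to a lower bound on $q$, i.e.\ to showing that a $k$-sample empirical mean of fair coins is \emph{noticeably likely} to deviate from the $n$-sample mean by more than $\epsilon$.

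The technical heart is an anti-concentration (reverse-Chernoff) estimate: choosing $n$ large enough that $\tfrac{1}{n}\sum_i Y_i^{(j)}$ concentrates within $\epsilon/2$ of $1/2$, the deviation $D_j$ is governed by the binomial fluctuation of $\sum_{i\in K}Y_i^{(j)} \sim \mathrm{Bin}(k,1/2)$, and a binomial tail lower bound of the form $\Pr[\,|S - k/2| > 2\epsilon k\,] \ge \exp(-C k \epsilon^2)$ (valid when $\epsilon$ is bounded away from $1/2$, which is where the hypothesis $\epsilon \le 1/24$ and the explicit constant $288$ enter) yields $q \ge \exp(-C k \epsilon^2) \ge m^{-\beta}$ for some $\beta < 1$ whenever $k \le k^\ast$. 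Plugging into the display gives $\Pr[K \text{ succeeds}] \le (1 - m^{-\beta})^{t} \le \exp(-\tfrac12 m^{1-\beta})$, which is doubly-exponentially small. Finally I would take a union bound over all $K$ with $|K|\le k^\ast$: there are at most $n^{k^\ast}$ of them, and choosing $m$ large with $n = \mathrm{poly}(m)$ makes $n^{k^\ast}\exp(-\tfrac12 m^{1-\beta}) < 1$, since $k^\ast \log n = O((\log m)^2/\epsilon^2) \ll m^{1-\beta}$. Thus with positive probability no $K$ of size $\le k^\ast$ works, establishing the existence of the desired $\bsigma_N$. I expect the main obstacle to be the anti-concentration step: getting an explicit, correctly-constantized lower bound on the binomial tail and cleanly separating the $S_K/k$ fluctuation from the negligible $S_N/n$ fluctuation, so that the constants line up to give exactly $\tfrac{1}{288\epsilon^2}\log m$ under the constraint $\epsilon \le 1/24$.
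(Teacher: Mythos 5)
Your proposal is correct and follows essentially the same route as the paper's proof: the same random pairing construction (independent fair coins deciding the order within each pair of adjacent positions), the same reduction to $m/2$ independent per-pair deviation events giving a $(1-q)^{m/2}$ bound, the same reverse-Chernoff/anti-concentration lower bound on $q$ of order $m^{-1/2}$ (the paper invokes Proposition 7.3.2 of Matou\v{s}ek and Vondr\'ak, handling $|K|\le 1/\epsilon$ separately by the probability of an all-ones sample), and the same union bound over the at most $(n+1)^{k^\ast}$ candidate subsets with $n$ polynomial in $m$. The only cosmetic difference is how the fluctuation of the full-population mean is absorbed (your $\epsilon/2$ concentration versus the paper's event $E$ plus a triangle inequality widening the $K$-deviation threshold to $2\epsilon$), which does not change the argument.
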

\vspace{-7pt}
\begin{proof}[Proof sketch]
We prove this result by designing a random $\bsigma_N$ such that with positive probability, there is no $K \subseteq N$ with $|K| \le \frac{1}{288\epsilon^2}\log(m)$ that satisfies $\epsilon$-positional proportionality. We construct the random $\bsigma_N$ as follows. For every metric $i$ and every $j \in [m/2]$, with probability $1/2$ we will have alternative $2j$ ranked in position $2j$ and alternative $2j+1$ in alternative $2j+1$, and with probability $1/2$ we will have alternative $2j+1$ ranked in position $2j$ and alternative $2j$ ranked in position $2j+1$. This is done independently for all metrics $i$ and all $j$. 
For each fixed $K \subseteq N$ with $|K| \le  \frac{1}{288\epsilon^2}\log(m)$, we upper bound the probability that $K$ satisfies $\epsilon$-positional proportionality to be exponentially small. Intuitively, $K$ has an exponentially small probability of satisfying $\epsilon$-positional proportionality because the randomly assigned rankings must be approximately evenly distributed for all $j \in [\tfrac{m}{2}]$ simultaneously, and independence implies this has small probability. We formally show this using an inverse version of Hoeffding's Inequality. After bounding the probability of any fixed $K$ satisfying $\epsilon$-positional proportionality, a union bound gives that with positive probability, no such $K$ satisfies $\epsilon$-positional proportionality. This proves there must exist a profile $\bsigma_N$ such that no $K$ with  $|K| \le \frac{1}{288\epsilon^2}\log(m)$ satisfies $\epsilon$-positional proportionality.
\end{proof}

As with positional representation, we can find the smallest $K$ that satisfies positional proportionality for a given instance using an integer program (see  Equation \eqref{eq:lp_for_pos_prop} in Appendix \ref{app:posprop_lp}).

\subsection{Approximating Scoring Rules}

A nice feature of positional proportionality is that it allows us to approximate scoring rules evaluated on $\bsigma_N$ using only $\bsigma_K$. Informally, a scoring rule such as Borda count aggregates multiple rankings into a single ranking by assigning scores to each alternative based on its position in each of the original rankings \citep{young1975social}. Because positional proportionality approximates the frequency at which an alternative $a$ is ranked above a cutoff $r$ up to an $\epsilon$ additive error, positional proportionality also approximates the frequency at which $a$ is ranked at exactly position $r$ up to a $2\epsilon$ additive error. This information in turn allows us to estimate the result of any scoring rule evaluated on $\bsigma_N$, which is especially helpful when $\bsigma_N$ is an intermediary of another computation, such as when $\bsigma_N$ is being used to decide a single winning alternative or a single meta-ranking.

Formally, a scoring rule has an associated score vector $s \in \mathbb{R}^m$, where $s_1 \ge ... \ge s_m$. It is without loss of generality to normalize so that $s_1 = 1$ and $s_m = 0$. When using a scoring rule to aggregate rankings, each metric awards $s_r$ points to the alternative that is ranked in position $r$, which results in each alternative having an average score of $f_s(a, \bsigma_N) := \frac{1}{|N|} \sum_{i \in N} s_{\sigma_i(a)}$. 
In Theorem \ref{thm:approx_scoring_rules}, we show that given any scoring rule and any $K$ satisfying $\epsilon$-positional proportionality, every alternative has approximately the same average score in $\bsigma_K$ as in $\bsigma_N$.
\smallskip
\begin{thm}[Proof in Appendix \ref{proof:thm:approx_scoring_rules}]\label{thm:approx_scoring_rules}
    If a subset $K$ satisfies $\epsilon$-positional proportionality, then for every scoring rule with score vector $s$ and every alternative $a \in A$, $\left|f_s(a, \bsigma_N) - f_s(a, \bsigma_K)\right| \le \epsilon$.
\end{thm}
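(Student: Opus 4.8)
The plan is to express the average score $f_s(a, \bsigma_N)$ in terms of the quantities $C(N,r,a)$ that positional proportionality directly controls, so that the $\epsilon$-bound transfers from the cumulative counts to the score. The natural tool is Abel summation (summation by parts). First I would write $f_s(a, \bsigma_N) = \frac{1}{|N|}\sum_{i \in N} s_{\sigma_i(a)}$ and regroup the sum by position rather than by metric: the number of metrics that rank $a$ in \emph{exactly} position $r$ is $C(N,r,a) - C(N,r-1,a)$ (with the convention $C(N,0,a)=0$), so $f_s(a,\bsigma_N) = \frac{1}{|N|}\sum_{r=1}^{m} s_r \bigl(C(N,r,a) - C(N,r-1,a)\bigr)$.

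Next I would apply summation by parts to rewrite this in terms of the cumulative counts $C(N,r,a)$ weighted by the score gaps $s_r - s_{r+1}$. Concretely, using $s_{m+1} := 0$ and $C(N,m,a) = |N|$, Abel summation gives
\begin{equation*}
f_s(a,\bsigma_N) = \frac{1}{|N|}\sum_{r=1}^{m} (s_r - s_{r+1})\, C(N,r,a) = \sum_{r=1}^{m} (s_r - s_{r+1})\,\frac{C(N,r,a)}{|N|}.
\end{equation*}
The identical manipulation applied to $K$ yields $f_s(a,\bsigma_K) = \sum_{r=1}^{m} (s_r - s_{r+1})\,\frac{C(K,r,a)}{|K|}$. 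Subtracting the two expressions and applying the triangle inequality, the score gaps $s_r - s_{r+1}$ are all nonnegative (since $s$ is nonincreasing), so I can factor them out of the absolute value:
\begin{equation*}
\bigl|f_s(a,\bsigma_N) - f_s(a,\bsigma_K)\bigr| \le \sum_{r=1}^{m} (s_r - s_{r+1}) \left|\frac{C(N,r,a)}{|N|} - \frac{C(K,r,a)}{|K|}\right|.
\end{equation*}

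Now I would invoke Definition \ref{def:pos_prop}: each term in the absolute value is at most $\epsilon$ by $\epsilon$-positional proportionality, so the whole sum is bounded by $\epsilon \sum_{r=1}^{m}(s_r - s_{r+1})$. This remaining sum telescopes to $s_1 - s_{m+1} = s_1 - 0 = 1$ under the normalization $s_1 = 1$, $s_m = 0$ (together with $s_{m+1}:=0$), giving exactly the claimed bound of $\epsilon$.

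The only real subtlety, and the step I would be most careful about, is getting the boundary conventions of the summation-by-parts exactly right, since an off-by-one error in the definition of $C(N,0,a)$ or $s_{m+1}$ would leak a spurious extra term or a wrong normalization constant. In particular it is important that $C(N,m,a) = |N|$ and $C(K,m,a) = |K|$ (every alternative is ranked within the top $m$ by every metric), which is what makes the top-order boundary term vanish after normalization; and it is the nonnegativity of the consecutive score differences that lets me pull the weights out of the absolute value without picking up sign issues. Once these conventions are fixed, the rest is a routine telescoping computation.
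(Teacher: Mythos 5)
Your proof is correct and follows essentially the same route as the paper's: both rewrite $f_s(a,\bsigma_N)$ as $\sum_{r=1}^m (s_r - s_{r+1})\,\frac{C(N,r,a)}{|N|}$ (the paper states this identity directly with $\Delta_r = s_r - s_{r+1}$, while you derive it explicitly via Abel summation), then apply the triangle inequality, the $\epsilon$-bound from Definition \ref{def:pos_prop}, and the telescoping sum $\sum_r (s_r - s_{r+1}) = s_1 = 1$. The boundary conventions you flag ($C(N,0,a)=0$, $s_{m+1}=0$, $C(N,m,a)=|N|$) are exactly the right things to check, and they all hold.
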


\section{Generalizations}\label{sec:generalized}

In previous sections, our goal has been to choose a subset of metrics that preserves rank information from the original set. However, there may be other types of information we would like to preserve instead of or in addition to rank information. For example, perhaps the metrics 
fall into different categories, and we want to include sufficiently many metrics of each category. It turns out that we can generalize both positional representation and positional proportionality to settings like this.

Formally, suppose we have a collection of $\gamma$ groups of metrics $\mathcal{G} = \{G_i\}_{i=1}^\gamma$ where $G_i \subseteq N$. Our goal is to choose a $K$ that represents every $G_i \in \mathcal{G}$. In the following two definitions, we generalize both positional representation and positional proportionality to this setting.

\smallskip
\begin{definition}\label{def:pos_rep_general}
    For a given $N$ and collection of groups $\mathcal{G}$, a subset $K \subseteq N$ satisfies \emph{ generalized representation} for group size $g$ if for every $G_i \in \mathcal{G}$, $ |K \cap G_i| \ge \left \lfloor \frac{|G_i|}{g} \right \rfloor.$
\end{definition}

\begin{definition}\label{def:pos_prop_general}
    For a given $N$ and collection of groups $\mathcal{G}$, a subset $K \subseteq N$ satisfies $\epsilon$-generalized proportionality for $\epsilon \ge 0$ if for every $G_i \in \mathcal{G}$, $ \left|\frac{|G_i|}{|N|} - \frac{|K \cap G_i|}{|K|}\right| \le \epsilon.$

\end{definition}

Note that positional representation (Definition \ref{def:pos_rep}) and positional proportionality (Definition \ref{def:pos_prop}) are special cases of Definitions \ref{def:pos_rep_general} and \ref{def:pos_prop_general} for a specific choice of $\mathcal{G}$ that depends on $\bsigma_N$. Specifically, given $\bsigma_N$, we can construct $\mathcal{G}$ as follows. Let $\gamma = m^2$ and let $\mathcal{G} = \{G_{ar}\}$ where for each $a \in A$ and $r \in [m]$ we define $G_{ar} :=  \{i \in N : \sigma_{i}(a) \le r\} $. In other words, for every $a \in A$ and $r \in [m]$, there is one group in $\mathcal{G}$  that corresponds to all of the metrics that rank $a$ in the top $r$ positions. By construction, any subset $K$ that satisfies generalized representation/proportionality for this choice of $\mathcal{G}$ will also satisfy positional representation/proportionality.

While the groups in Definitions \ref{def:pos_rep_general} and \ref{def:pos_prop_general} can be based on $\bsigma_N$, they need not be. Definitions \ref{def:pos_rep_general} and \ref{def:pos_prop_general} give us the freedom to define groups in whatever way is useful, which in turn allows us to specify which types of information to preserve. Below are some examples of groups we could define:

\begin{itemize}[leftmargin=*]
    \item  Suppose some metrics are in English, some are in Chinese, and some are in Spanish. Then for each language, we could have a group in $\mathcal{G}$ corresponding to all metrics in that language.
    \item Suppose the metrics have a range of difficulty. Then for each difficulty level (e.g. very easy, easy, hard, very hard), we could have a group in $\mathcal{G}$ corresponding to all metrics of that difficulty level. 
    \item Suppose we have ten experts who each believe a different subset of metrics are important. Then for each expert, we could have a group in $\mathcal{G}$ including all metrics that expert supports. \label{item:experts}
\end{itemize}


In Appendix \ref{app:general}, we show how the lower and upper bounds of Theorems \ref{thm:pos_rep_lower_bound}--\ref{thm:pos_prop_lower_bound} can be generalized to give bounds for Definitions \ref{def:pos_rep_general} and \ref{def:pos_prop_general}. In Appendix \ref{app:general}, we also discuss the relationship between Definition \ref{def:pos_rep_general} and set cover. Finally, we show that finding the smallest set that satisfies either Definition \ref{def:pos_prop_general} or Definition \ref{def:pos_rep_general} is NP-hard.

\section{Empirical Case Studies}\label{sec:case_studies}

We demonstrate our proposed definitions and algorithms on three real-world case studies that involve selecting a subset of metrics for evaluation and decision making. To illustrate the wide potential applicability of our approach, we consider two case studies on evaluating LLM capabilities, and one case study on evaluating hospital quality. Each case study includes a full set of metrics, an existing subset of metrics currently deployed (e.g., an existing LITE benchmark), and a set of alternatives. Our experiments focus on two goals: \textit{(i)} supplementing our theory by comparing the performance of our algorithms relative to the stated upper and lower bounds on real datasets, and \textit{(ii)} demonstrating practical relevance by comparing against existing deployed subsets. An additional consideration is that a practitioner might want to keep an existing curated subset, so we also show that our algorithms can also be used to \textit{augment} an existing subset.  
We summarize each case study below, and provide more details and code in the Supplemental Materials. 

\paragraph{Case Study 1: BIG-bench \citep{srivastava2022beyond}.} 
We consider the problem of selecting a subset of $n=141$ BIG-bench JSON metrics to include in a ``lite'' version. The existing BIG-bench Lite includes $k=24$ JSON metrics, and was ``designed to provide a canonical measure of model performance, while
being far cheaper to evaluate than the full set.''  The alternatives consist of $m=120$ LLMs from three model families.

\paragraph{Case Study 2: HELM \citep{liang2022holistic}.} We next consider the problem of selecting a subset of $n=34$ scenarios available on HELM Classic for a Lite version. 
For this case study, we compare against the $k=7$ metrics from HELM Lite that are from HELM Classic. The alternatives consist of $m=67$ models that appeared on the HELM Classic leaderboard as of March, 2025.

\paragraph{Case Study 3: Cal Hospital Compare \citep{calhospitalcompare}.} Beyond LLMs, we also demonstrate how our methods can apply more widely through a case study on hospital quality evaluation. Cal Hospital Compare uses a subset of $k=12$ hospital quality metrics selected from a full set of metrics collected by the Centers for Medicare and Medicaid Services (CMS). 
For the purposes of this illustration, we consider the problem of selecting a ``representative'' set of quality metrics from the $n=50$ existing patient safety metrics available in the CMS Hospital Compare database.
The alternatives consist of $m=282$ hospitals in California.

\subsection{Results}

We now empirically evaluate the performance of the proposed algorithms for achieving positional representation (PR) and positional proportionality (PP). We evaluate each method by comparing the subset sizes $|K|$ achieved for each tolerance parameter (group size $g$ for PR, tolerance $\epsilon$ for PP). Smaller subsets are better.

\paragraph{Positional Representation.} We first evaluate the performance of Algorithm 1 (Greedy) relative to our theoretical upper and lower bounds, as well as the optimal integer programming solution.\footnote{The optimal integer program is computationally feasible as our real instances have $n\leq 141$ and $m\leq 282$. Even when the IP is employed, Theorem~\ref{thm:greedy} is directly useful as it upper bounds the size of the optimal solution.} Figure \ref{fig:pr} shows that in practice, the greedy algorithm performs significantly better than the upper bound, but a gap remains relative to the optimal integer programming solution for small group sizes. 

For all case studies, the greedy algorithm finds a subset smaller than the existing subset, while guaranteeing positional representation for a smaller group size $g$ than the existing subset guarantees (marked by the blue points in the bottom left quadrant). This suggests that if positional representation is important to a practitioner, it is possible to achieve it more efficiently than the existing subset.

In practice, the existing subset is often carefully curated. Thus, we also evaluate the performance of using our greedy and integer programming algorithms to optimally augment the existing subset for PR. Figure \ref{fig:pr} shows that a subset optimally augmented using an integer program can perform better than the greedy algorithm for small group sizes. A computational advantage of augmenting the existing subset is that it reduces the free parameters of the integer programming problem. Thus, augmenting the existing subset could be a computationally practical approach that can beat the greedy algorithm in some cases, and is worth considering by practitioners.

\setlength{\tabcolsep}{0pt}
\renewcommand{\arraystretch}{0.5}
\begin{figure}
    \centering
    \begin{tabular}{ccc}
        \scriptsize{BIG-bench} & \scriptsize{HELM} & \scriptsize{Cal Hospital Compare} \\
        \includegraphics[width=0.32\textwidth, trim={0 0.2cm 0 0.2cm},clip]{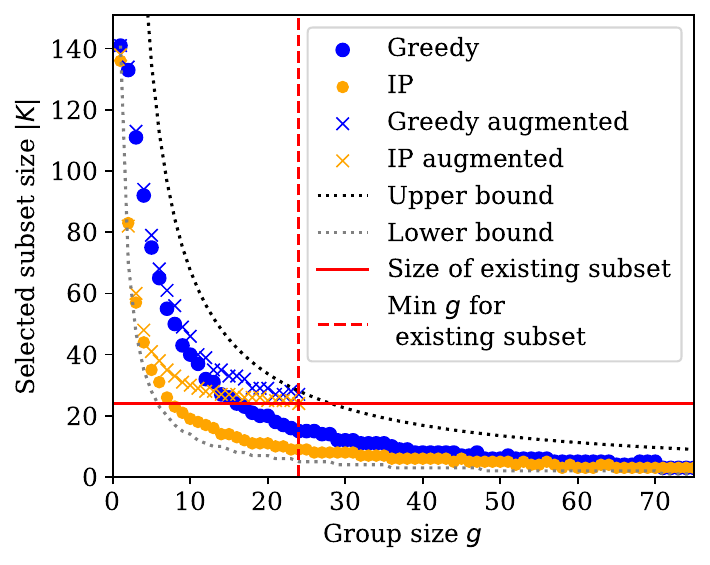} & \includegraphics[width=0.32\textwidth, trim={0 0.2cm 0 0.2cm},clip]{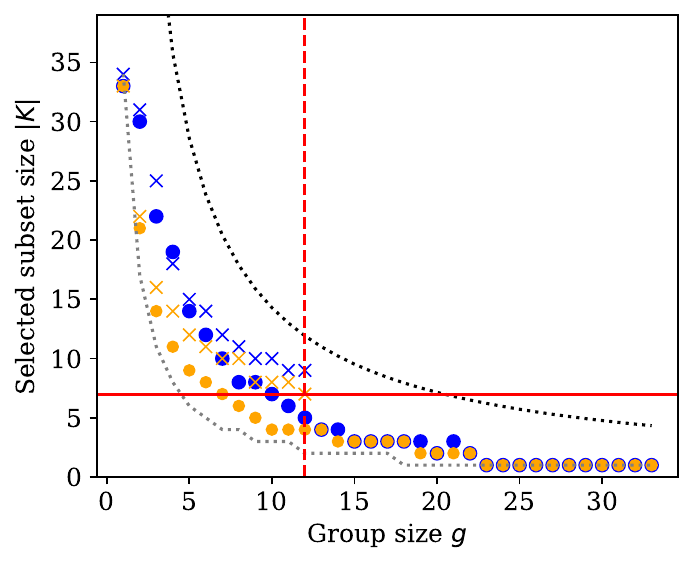} & \includegraphics[width=0.32\textwidth, trim={0 0.2cm 0 0.2cm},clip,clip]{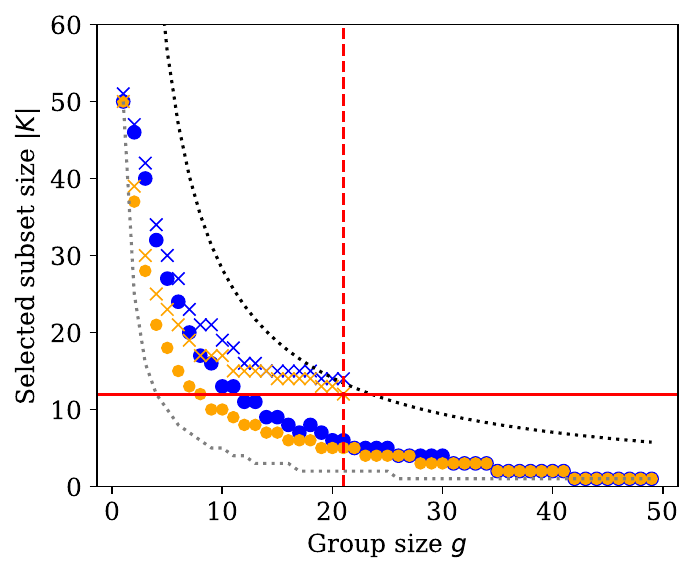} \\
    \end{tabular}
    \caption{Results of running greedy and integer programming algorithms to achieve positional representation. The upper and lower bounds are from Theorems \ref{thm:greedy} and \ref{thm:pos_rep_lower_bound}. The dashed red line marks the smallest $g$ for which the existing subset guarantees PR.}
    \label{fig:pr}
\end{figure}

\paragraph{Positional Proportionality.} For positional proportionality, we compare the integer programming solutions to the existing subsets. Figure \ref{fig:pp} shows that for all datasets, the IP is able to find subsets of a smaller size than the existing subset that can guarantee PP at a lower $\epsilon$ tolerance. This again suggests that there exist more efficient ways to achieve PP. As with PR, it is also possible to augment the existing subset to achieve PP. This was most apparent with Cal Hospital Compare, where it is possible to achieve a much smaller $\epsilon$ by adding less than five more metrics. 

\setlength{\tabcolsep}{0pt}
\renewcommand{\arraystretch}{0.5}
\begin{figure}
    \centering
    \begin{tabular}{ccc}
        \scriptsize{BIG-bench} & \scriptsize{HELM} & \scriptsize{Cal Hospital Compare} \\
        \includegraphics[width=0.32\textwidth, trim={0 0.2cm 0 0.2cm},clip]{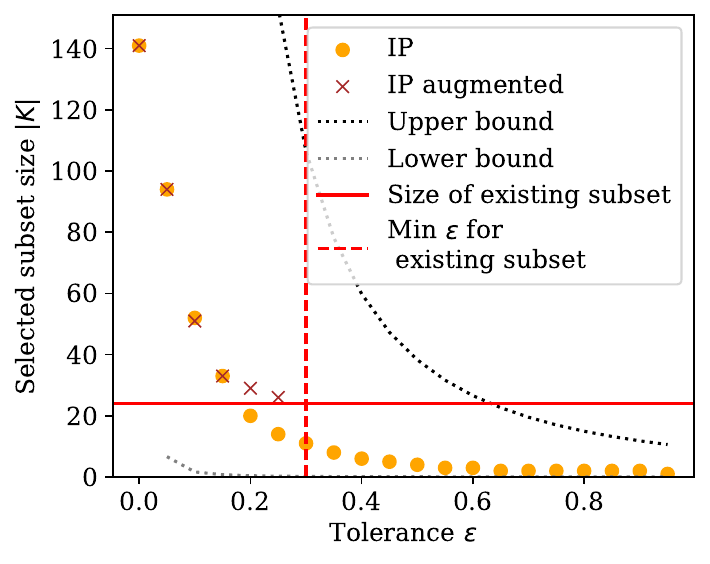} & \includegraphics[width=0.32\textwidth, trim={0 0.2cm 0 0.2cm},clip]{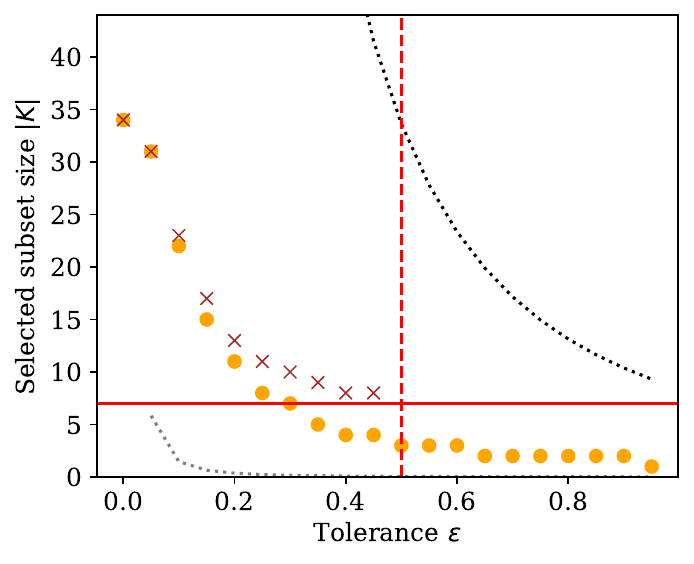} & \includegraphics[width=0.32\textwidth, trim={0 0.2cm 0 0.2cm},clip]{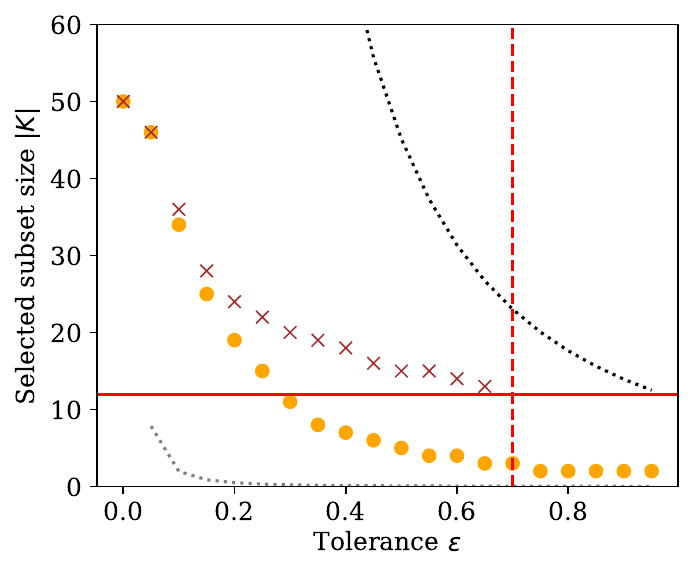} \\
    \end{tabular}
    \caption{Results of running integer programming algorithms to achieve positional proportionality. The dashed red line marks the smallest $\epsilon$ for which the existing subset guarantees PP. The upper and lower bounds are from Theorems \ref{thm:prob_method_pos_prop} and \ref{thm:pos_prop_lower_bound}}
    \label{fig:pp}
\end{figure}

\section{Discussion}

We conclude by discussing limitations and future directions. In our work, we introduced two desirable properties that a subset of metrics should have in order to preserve rank information from the original set of metrics. One limitation is that it is not clear whether the subset chosen is good for evaluating new alternatives, especially if there is a distribution shift in the nature of alternatives over time (e.g., major advancements in LLMs). While we expect that the chosen subset of metrics will be reasonable at evaluating new alternatives in the short term, we recommend occasionally recomputing the subset as necessary to account for new metrics joining the overall set or paradigm shifts among the alternatives. As an open question, it would also be interesting to obtain theoretical guarantees about how well a selected subset of metrics generalizes to evaluating new alternatives. 

Throughout our work, we attempt to curate a subset which is reflective of the overall set. However, if the overall set is biased or some types of metrics are overrepresented, we would expect that bias to be reflected in our subset as well. The purpose of our work is to select a good subset of metrics assuming the overall set captures what the user wants. This means our algorithms are agnostic to how and why the overall set of metrics was selected, which allows for tremendous flexibility, but also puts some onus on the user to make sure the overall set achieves the desired objective. 

There are several other future directions of note. First, while we provide some case studies in Section \ref{sec:case_studies}, it would certainly be interesting to study other practical settings and the best choice of parameters $g$ and $\epsilon$ in each. In this work, we largely did not discuss what to do when there is missing data, i.e., if not all metrics rank all alternatives. While one natural approach is to assume all missing alternatives are tied for last, this is not the only viable approach, and we leave exploration in this direction to future work. Finally, certain metrics may have higher costs (e.g., running them may require more computational resources). It would be interesting to explore how to balance cost and usefulness of metrics when selecting a representative subset, especially if the user is budget-constrained.

\subsection*{Acknowledgments}

The idea of positional proportionality initially came up in a discussion with Daniel Halpern, Evi Micha, and Itai Shapira, and we are grateful to them for giving us permission to use it.

\bibliographystyle{plainnat}

\bibliography{references}
\newpage
\appendix
\section{Integer Programs}\label{app:posprop_lp}
In this section, we present integer programs which for a given instance $\bsigma_N$ give the smallest number of metrics necessary to guarantee either positional representation or positional proportionality

First, we define $S_{ra}$ to be the set of metrics which rank $a$ at least as high as $r$, i.e. $S_{ra} = \{i \in N: \sigma_i(a) \leq r\}$.

The IP for finding the minimum size set that satisfies positional representation is as follows:
\begin{align*}
    \min &\sum_{i \in N} x_i \\
    \text{s.t. } &\sum_{i \in S_{ra}} x_i \geq \left \lfloor \frac{C(N,r,a)}{g} \right \rfloor \quad \forall \: r \in [m],a \in A \\
    & x_i \in \{0,1\}\quad \forall \: i \in[n]  \numberthis \label{eq:lp_for_pos_rep}
\end{align*}

The IP for finding the minimum size subset that satisfies positional proportionality is as follows:
\begin{align*}
    \min &\sum_{i \in N} x_i \\
    \text{s.t. } &\sum_{i \in S_{ra}} x_i \geq \left( \frac{C(N,r,a)}{|N|} - \epsilon \right) \left(\sum_{i\in N} x_i\right) \quad \forall \: r \in [m],a \in A \\
     &\sum_{i \in S_{ra}} x_i \leq \left( \frac{C(N,r,a)}{|N|} + \epsilon \right) \left(\sum_{i\in N} x_i\right) \quad \forall \: r \in [m],a \in A \\
    & x_i \in \{0,1\}\quad \forall \: i \in[n]  \numberthis \label{eq:lp_for_pos_prop}
\end{align*}

\section{General Versions and Set Cover}\label{app:general}

\subsection{Theoretical Results}

Our main theoretical results from the previous two sections generalize to this setting. The main difference is that the $\log$ term in the theorem statement will now depend on the size of $\mathcal{G}$. Intuitively, if $\mathcal{G}$ is bigger, then more metrics are needed to guarantee representation for all $G \in \mathcal{G}$. We state the generalizations of the upper bounds in Theorems \ref{thm:greedy_general} and \ref{thm:prob_method_pos_prop_general}. Note that in the ranking setting, $|G| = m^2$, so Theorems \ref{thm:greedy_general} and \ref{thm:prob_method_pos_prop_general} give a worse upper bound than Theorems \ref{thm:greedy} and \ref{thm:prob_method_pos_prop}. This is because for positional representation and positional proportionality, we have a better upper bound on the total size over all groups.

\begin{thm}\label{thm:greedy_general}
    For any $\bsigma_N$, collection of groups $\mathcal{G}$, and group size $g$, the generalized greedy algorithm (Algorithm \ref{algo:greedy_general}) terminates in polynomial time and returns a subset $K$ with $|K| \le \frac{n}{g}\log\left(|\mathcal{G}|\right)$ which satisfies generalized representation for group size $g$ .
\end{thm}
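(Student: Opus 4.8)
The plan is to reduce generalized representation to a set-cover instance via a coloring scheme that exactly mirrors the proof of Theorem \ref{thm:greedy}, and then to observe that the \emph{only} quantity that changes in the analysis is the initial number of colors. First I would describe the coloring phase of Algorithm \ref{algo:greedy_general}: for each group $G_i \in \mathcal{G}$, partition an arbitrary $g\lfloor |G_i|/g\rfloor$-element subset of $G_i$ into $\lfloor |G_i|/g\rfloor$ disjoint blocks of size exactly $g$, and assign each block a fresh color. The contrast with positional representation is the crux of the generalization: there, ordering each alternative's instances by position lets a single family of $\lfloor n/g\rfloor$ colors per alternative serve every cutoff $r$ simultaneously, so only $\le mn/g$ colors are ever created even though $|\mathcal{G}| = m^2$; here the groups are arbitrary and carry no nesting structure, so each group must contribute its own colors independently. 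The correctness observation is that if $K$ contains at least one metric from every color class, then generalized representation holds: for each $G_i$, the $\lfloor |G_i|/g\rfloor$ blocks are disjoint subsets of $G_i$, so $K$ picks up at least one distinct metric of $G_i$ per block, giving $|K \cap G_i| \ge \lfloor |G_i|/g\rfloor$. Hence it suffices for the greedy selection phase to cover all colors.

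Next I would run the identical greedy set-cover analysis. Let $Q_t$ be the number of colors still uncovered after $t$ iterations of the selection loop. Since every color class has exactly $g$ metrics, and an uncovered color has none of its $g$ metrics in $K$, the number of incidences between metrics and uncovered colors is $g Q_t$, spread over the at most $n$ metrics in $N \setminus K$; by averaging, some unselected metric covers at least $Q_t/(n/g)$ colors, so the greedy choice gives $Q_{t+1} \le Q_t\left(1 - \tfrac{g}{n}\right)$. Moreover, once $Q_t \le n/g$ each iteration still covers at least one color, so $Q_{t+1} \le Q_t - 1$. These are precisely the two recurrences driving Theorem \ref{thm:greedy}.

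The only substantive change is the bound on the initial color count: since each $|G_i| \le n$, we have $Q_0 = \sum_i \lfloor |G_i|/g\rfloor \le |\mathcal{G}|\,\tfrac{n}{g}$, in place of $\tfrac{mn}{g}$. Feeding $Q_0 \le |\mathcal{G}|\,\tfrac{n}{g}$ through the same two-phase computation as in Theorem \ref{thm:greedy}\,---\,a multiplicative decrease down to $Q_t \le n/g$ requiring a factor-$|\mathcal{G}|$ reduction and hence at most $\tfrac{n}{g}\log(|\mathcal{G}|)$ steps, followed by an additive decrease of at most $\tfrac{n}{g}$ steps\,---\,yields $|K| \le \tfrac{n}{g}\log(|\mathcal{G}|)$. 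Polynomial running time is immediate: the coloring phase is linear in the total group size $\sum_i |G_i|$, and the selection loop runs for $|K|$ iterations, each scanning all metrics against the remaining colors. I do not expect a serious obstacle here, since the set-cover machinery is inherited wholesale; the one point to state carefully is the first paragraph's correctness claim, namely that independently coloring each group and then covering every color both suffices for generalized representation and is exactly what replaces the cross-cutoff color sharing of positional representation, thereby degrading $\log(m)$ to $\log(|\mathcal{G}|)$.
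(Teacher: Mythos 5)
Your proposal is correct and follows essentially the same route as the paper, which simply invokes the proof of Theorem \ref{thm:greedy} with the single change $Q_0 \le |\mathcal{G}|\cdot\frac{n}{g}$ in place of $Q_0 \le m\cdot\frac{n}{g}$; your write-up fleshes out the coloring correctness and the two-phase recurrence in exactly the intended way. The only (inherited) quibble is that the two-phase count actually gives $\frac{n}{g}(1+\log|\mathcal{G}|)$, matching the $O(\cdot)$ form of Theorem \ref{thm:greedy} rather than the bare bound as stated.
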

\begin{proof}
    The proof of this result follows as in the proof of Theorem \ref{thm:greedy}, except that instead of $Q_0 \le m\alpha$ we have that $Q_0 \le |\mathcal{G}|\alpha$. The rest of the recursion proof follows exactly the same to give the desired bound.
\end{proof}
\begin{thm}\label{thm:prob_method_pos_prop_general}
    For any $\bsigma_N$, collection of groups $\mathcal{G}$, and $\epsilon > 0$, there exists a subset $K$ with size $|K| \le \frac{1}{2\epsilon^2}\log\left(4|\mathcal{G}|\right)$ that satisfies generalized proportionality.
\end{thm}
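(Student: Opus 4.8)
The plan is to prove this by the probabilistic method, in direct parallel with the argument behind Theorem \ref{thm:prob_method_pos_prop}: I will draw a random subset $K \subseteq N$ of a fixed size and show that with positive probability it simultaneously satisfies the constraint of Definition \ref{def:pos_prop_general} for \emph{every} group in $\mathcal{G}$. Concretely, let $k$ be an integer close to $\frac{1}{2\epsilon^2}\log(4|\mathcal{G}|)$ (I discuss the rounding below). If $k \ge n$ the statement is immediate, since $K = N$ achieves $\frac{|K \cap G_i|}{|K|} = \frac{|G_i|}{|N|}$ exactly; so assume $k < n$ and sample $K$ uniformly at random among all $k$-element subsets of $N$, i.e. draw $k$ metrics without replacement.

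For each group $G_i \in \mathcal{G}$, write $p_i = |G_i|/|N|$ and let $\hat p_i = |K \cap G_i|/k$ be the empirical proportion of the sampled metrics that lie in $G_i$. Writing $|K \cap G_i| = \sum_{j=1}^{k} Y_j^{(i)}$, where $Y_j^{(i)} = \mathbf{1}[\text{the }j\text{-th draw lies in }G_i]$, the quantity $\hat p_i$ is the mean of $k$ samples taking values in $[0,1]$ drawn without replacement from the finite population $\{\mathbf{1}[i' \in G_i] : i' \in N\}$, whose population mean is exactly $p_i$. Define the bad event $B_i = \{|\hat p_i - p_i| > \epsilon\}$. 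Hoeffding's inequality — which holds with the same exponential bound for sampling without replacement as in the i.i.d. case — then gives $\Pr[B_i] \le 2\exp(-2k\epsilon^2)$.

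Taking $k = \frac{1}{2\epsilon^2}\log(4|\mathcal{G}|)$ yields $2\exp(-2k\epsilon^2) = 2\exp(-\log(4|\mathcal{G}|)) = \frac{1}{2|\mathcal{G}|}$, so a union bound over the $|\mathcal{G}|$ groups gives $\Pr\!\left[\bigcup_i B_i\right] \le |\mathcal{G}| \cdot \frac{1}{2|\mathcal{G}|} = \frac{1}{2} < 1$. Hence with probability at least $\frac{1}{2}$ none of the bad events occur, meaning the sampled $K$ satisfies $|\hat p_i - p_i| \le \epsilon$ for every $G_i$ — exactly $\epsilon$-generalized proportionality. A uniformly random $K$ therefore works with positive probability, so at least one such $K$ exists with $|K| = k \le \frac{1}{2\epsilon^2}\log(4|\mathcal{G}|)$. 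Two remarks make this fully rigorous. First, for integrality I would actually take $k = \lfloor \frac{1}{2\epsilon^2}\log(4|\mathcal{G}|)\rfloor$: the factor $4$ inside the logarithm (rather than the bare $2|\mathcal{G}|$ that a union bound nominally requires) leaves a $\frac{\log 2}{2\epsilon^2}$ cushion that absorbs rounding down to an integer while keeping the total failure probability strictly below $1$. Second, as a sanity check, substituting $|\mathcal{G}| = m^2$ (the number of position-alternative groups in Section \ref{sec:positional_proportionality}) recovers $\frac{1}{2\epsilon^2}\log(4m^2) = \frac{1}{\epsilon^2}\log(2m)$, matching the bound of Theorem \ref{thm:prob_method_pos_prop}.

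The only genuine obstacle is the bookkeeping around sampling \emph{without} replacement. I need $K$ to be an honest subset (not a multiset) of exactly the claimed cardinality, which forces me to sample without replacement rather than i.i.d.; consequently I must invoke the without-replacement form of Hoeffding's inequality rather than its textbook i.i.d. statement. Everything else is a direct specialization of the proof of Theorem \ref{thm:prob_method_pos_prop}, with the union bound now ranging over the $|\mathcal{G}|$ arbitrary groups of Definition \ref{def:pos_prop_general} instead of the structured $m^2$ groups arising from ranks.
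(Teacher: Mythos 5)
Your proposal is correct and follows essentially the same route as the paper: sample $K$ without replacement, apply the without-replacement form of Hoeffding's inequality to each group, and union bound over the $|\mathcal{G}|$ groups instead of the $m^2$ alternative--position pairs. Your treatment of integrality is slightly more careful than the paper's, but the argument is otherwise identical.
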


\begin{proof}
    The proof follows as in the proof of Theorem \ref{thm:prob_method_pos_prop}, except we now have a union bound over all $|\mathcal{G}|$ groups rather than all $m^2$ combinations of $a$ and $r$.
\end{proof}

Because positional representation and positional proportionality are special cases of general representation and general proportionality, the lower bounds of Theorems \ref{thm:pos_rep_lower_bound} and \ref{thm:pos_prop_lower_bound} also directly generalize to the general versions of these properties.

\subsection{Relationship to Set Cover}

General representation (Definition \ref{def:pos_rep_general}) is closely related to set cover, but has some key differences. In set cover, the input is a set of elements $U$ and a collection of subsets $\mathcal{C}$ where $C \subseteq U$ for all $C \in \mathcal{C}$. The goal is to find the smallest number of subsets from $\mathcal{C}$ whose union is equal to $U$. Similarly, in order to satisfy general representation, we need to find a subset of metrics which covers each group sufficiently many times. However, general representation differs from set cover in that in general representation, the number of times each $G$ (element) needs to be covered depends on the number of metrics that are in $G$. For example, if $|G| < g$, then $G$ does not need to be covered by $K$ at all. On the other hand, if $|G| = \ell g$, then $G$ needs to be covered at least $\ell$ times in $K$. Note that general representation also differs from the set multi-cover problem (where each element has a given number of times it must be covered) because in that problem, the number of times an element must be covered is not tied to its frequency \citep{chekuri2012set, hua2009exact}. 

More formally, suppose we index $\mathcal{G} = \{G_1,...,G_{\gamma}\}$. Consider the set cover problem with $U = \{1,...,\gamma\}$ and $\mathcal{C} = \{C_1,...,C_n\}$, where $C_i = \{j \in [\gamma] : i \in G_j\}$. The goal of set cover for this choice of $U$ and $\mathcal{C}$ is to find the smallest $K  \subseteq [N]$ such that for every $u \in U$,  $|\{i \in K : u \in C_i\}| \ge 1$. Using the same notation, the goal of finding the smallest $K$ that satisfies general representation is equivalent to finding the smallest $K \subseteq N$ such that for every $u \in U$, $|\{i \in K : u \in C_i\}| \ge \left\lfloor\frac{|\{i \in N : u \in C_i\}|}{g}\right\rfloor$. 

It is well-known that there exists an algorithm which achieves a $\ln(|\mc{C}|)$-approximation for set cover \citep{johnson1973approximation, chvatal1979greedy}. We note that Theorem \ref{thm:greedy_general} is not subsumed by this result. First, observe that both these theorems guarantee an absolute bound on the number of benchmarks that are needed to satisfy general representation, rather than an approximation to the minimum number of benchmarks needed. In set cover, it is impossible to guarantee an absolute bound better than $|\mc{C}|$ --  consider, for instance, the set cover instance where $\mc{c} = \{\{u\}:u \in U\}$. Second, as mentioned earlier, general representation differs from set cover by requiring representation for an element based on the frequency that element appears.

We show that it is NP-hard to find the minimum size subset that satisfies either Definition \ref{def:pos_rep_general} or Definition \ref{def:pos_prop_general}. The proofs of Theorems \ref{thm:rep_np_hard} and \ref{thm:prop_np_hard} can be found in Appendix \ref{proof:thm:rep_np_hard} and \ref{proof:thm:prop_np_hard} respectively.

\begin{thm}\label{thm:rep_np_hard}
    The problem of finding the smallest set $K$ that satisfies generalized representation (Definition \ref{def:pos_rep_general}) is NP-hard.
\end{thm}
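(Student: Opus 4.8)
The plan is to reduce from the minimum Vertex Cover problem, which is NP-hard. Given a graph $G = (V,E)$, I would construct an instance of generalized representation (Definition \ref{def:pos_rep_general}) as follows. Let the metric set be $N = V$, so that there is one metric per vertex, and for each edge $e = \{u,v\} \in E$ create a group $G_e = \{u,v\} \subseteq N$ consisting of the two endpoints of $e$. The collection is $\mathcal{G} = \{G_e : e \in E\}$, and I would set the group size to $g = 2$. This construction is clearly computable in polynomial time.

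Next I would verify that the two optimization problems coincide. Since $|G_e| = 2$ for every edge, the requirement in Definition \ref{def:pos_rep_general} becomes $|K \cap G_e| \ge \lfloor 2/2 \rfloor = 1$ for every $e \in E$. Thus $K$ satisfies generalized representation for group size $g = 2$ if and only if $K$ contains at least one endpoint of every edge, i.e., if and only if $K$ is a vertex cover of $G$. Consequently the minimum-size $K$ satisfying generalized representation is exactly a minimum vertex cover, and a vertex cover of size at most $k$ exists if and only if a satisfying $K$ of size at most $k$ exists. Since minimum Vertex Cover is NP-hard, finding the smallest $K$ satisfying Definition \ref{def:pos_rep_general} is NP-hard.

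The reduction above is immediate precisely because restricting all groups to size $2$ makes every floor requirement equal to $1$, which is exactly the hitting condition of vertex cover. The only place where care is needed\,---\,and what would become the main obstacle in a reduction from general Set Cover instead\,---\,is controlling the floor function $\lfloor |G_i|/g \rfloor$. For Set Cover, whose group (element) frequencies vary, one would pad each group $G_j$ with private dummy metrics so that all groups reach a common size $D$ and then set $g = D$, forcing every requirement to equal $1$. The delicate step is then arguing that the dummies never help: since each dummy lies in a single group and every group is also covered by at least one genuine subset-metric (by feasibility of the Set Cover instance), any dummy appearing in an optimal $K$ can be deleted or swapped for a genuine metric without increasing $|K|$, so the optimum is unchanged and equals the minimum set cover. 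Because using Vertex Cover sidesteps this padding argument entirely, I would present it as the primary reduction and mention the Set Cover version only as an alternative.
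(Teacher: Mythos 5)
Your proof is correct, but it takes a genuinely different route from the paper. The paper reduces from Set Cover: it initializes $G_u = \{i : u \in C_i\}$, sets $g = \max_u |G_u|$, pads each group with private dummy metrics until all groups have size exactly $g$, and then needs a back-and-forth argument (including a swapping step showing that any dummy metric in a feasible $K$ can be replaced by a genuine one without increasing $|K|$) to establish that the optima coincide. Your reduction from Vertex Cover sidesteps all of this: with every group $G_e$ of size $2$ and $g=2$, the requirement $|K \cap G_e| \ge \lfloor 2/2\rfloor = 1$ is immediately the edge-hitting condition, so feasible sets are exactly vertex covers and no padding or dummy-removal argument is needed. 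Your closing paragraph correctly identifies the floor function as the only delicate point and sketches essentially the paper's own padding construction as the Set Cover alternative. Both reductions are polynomial-time and both suffice for NP-hardness; yours is shorter and cleaner, while the paper's choice of Set Cover is more thematically aligned with its surrounding discussion of the relationship between generalized representation and set cover (and with the companion greedy approximation results). One minor point worth making explicit in a final write-up: the generalized problem's input is just $(N, \mathcal{G}, g)$ with $\mathcal{G}$ an arbitrary collection of subsets, so your edge-groups are a legitimate instance even though they do not arise from any ranking profile; the paper's own reduction relies on the same freedom.
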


\begin{thm}\label{thm:prop_np_hard}
    The problem of finding the smallest set $K$ that satisfies generalized proportionality (Definition \ref{def:pos_prop_general}) is NP-hard.
\end{thm}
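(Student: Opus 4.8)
The plan is to prove NP-hardness by reduction from Exact Cover by 3-Sets (X3C), restricted to instances in which every element of the universe lies in \emph{exactly} three of the given triples; this restriction remains NP-complete (and can in any case be enforced by padding). Recall such an instance consists of a universe $U$ with $|U| = 3q$ and a collection $\mathcal{S} = \{S_1, \dots, S_t\}$ of distinct $3$-element subsets of $U$, where each $u \in U$ lies in exactly three of the $S_j$; counting incidences then forces $t = 3q$. The question is whether some subcollection of $q$ triples partitions $U$.

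First I would build the generalized-proportionality instance. Let the metrics be the triples, with metric $j$ identified with $S_j$, so $|N| = t = 3q$, and for each element $u \in U$ introduce a group $G_u = \{ j : u \in S_j \}$, giving $\mathcal{G} = \{G_u\}_{u \in U}$ with every $|G_u| = 3$. Since $|N| = 3q$, the target proportion of every group is $|G_u|/|N| = 3/(3q) = 1/q$. Finally set the tolerance to any $\epsilon \in [0, 1/q)$ (for concreteness one may take $\epsilon = 0$). I would then show that the minimum $|K|$ satisfying $\epsilon$-generalized proportionality is at most $q$ if and only if the X3C instance admits an exact cover, which establishes NP-hardness of the associated decision problem, and hence of the minimization.

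The heart of the argument is a counting step that pins $K$ down at size $q$. For any nonempty $K$, summing membership counts gives $\sum_{u \in U} |K \cap G_u| = \sum_{j \in K} |S_j| = 3|K|$. Because $\epsilon < 1/q$, the lower bound in Definition \ref{def:pos_prop_general} forces $|K \cap G_u|/|K| \ge 1/q - \epsilon > 0$, so $|K \cap G_u| \ge 1$ for every $u$; summing over the $3q$ groups yields $3|K| \ge 3q$, i.e. $|K| \ge q$. When $|K| = q$ the identity above reads $\sum_{u} |K \cap G_u| = 3q$ with each of the $3q$ terms at least $1$, forcing $|K \cap G_u| = 1$ for all $u$ (and the upper bound $|K \cap G_u| \le |K|(1/q + \epsilon) < 2$ is then automatically satisfied). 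This says the $q$ chosen triples cover each element exactly once, i.e. form an exact cover. Conversely, an exact cover of size $q$ gives $|K \cap G_u| = 1$ and proportion exactly $1/q$, meeting every constraint. Since $K = N$ always satisfies proportionality (every proportion equals $1/q$), a feasible solution always exists and all feasible $K$ have size $\ge q$, so the minimum equals $q$ precisely when an exact cover exists.

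I expect the main obstacle to be the base-problem bookkeeping rather than the counting argument: I must use the uniform-occurrence restriction of X3C so that all groups share the single target proportion $1/q$, since with groups of differing sizes an exact cover (each group hit once) would not simultaneously satisfy the two-sided bounds. If the most convenient NP-complete version only guarantees each element in \emph{at most} three triples, I would first pad to exactly three occurrences, taking care that the padding introduces no spurious small covers and does not perturb the target proportions; verifying this padding preserves the equivalence is the one routine-but-delicate piece. The admissible range $\epsilon \in [0, 1/q)$ and the essential role of the upper bound (which is what rules out over-covering at size $q$, and is where proportionality genuinely differs from representation) should be stated explicitly, but both follow immediately from the counting identity.
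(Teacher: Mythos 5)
Your proof is correct, and it reaches the same destination as the paper's by a noticeably cleaner route. Both arguments reduce from Exact Cover by 3-Sets with groups defined as element-incidence sets, but you diverge on how to make all groups share a single target proportion: the paper starts from vanilla X3C and pads the \emph{constructed} instance, adding dummy metrics (each lying in at most two groups) until every group has size $2\kappa$ inside a universe of $\kappa\tau$ metrics, then argues with $\epsilon=1/\tau$ that a size-$\tau/3$ solution must avoid the dummies and hit each group exactly once. You instead restrict the \emph{source} problem to RX3C (every element in exactly three triples), which makes all groups have size $3$ for free and lets a single double-counting identity $\sum_u |K\cap G_u| = 3|K|$ do all the work. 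Your version buys a shorter, more transparent argument with no dummy-metric bookkeeping; the cost is that it leans entirely on the NP-completeness of the restricted variant. That result is standard (Gonzalez, 1985) and you should cite it rather than rely on your fallback of ``padding to exactly three occurrences,'' which is the one genuinely delicate step you flag: naively adding triples to boost an element's occurrence count introduces fresh elements with too few occurrences and risks spurious covers, so the padding is not routine. Two small points of tidiness: your own counting identity, not the upper-bound half of Definition \ref{def:pos_prop_general}, is what rules out over-covering at size $q$ (your closing remark misattributes this), and you should state explicitly that $K=\emptyset$ is excluded so the ratios are well defined.
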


\section{Additional Related Works}\label{app:related_works}

In the setting of JR, there are $n$ voters and $m$ candidates, and each voter indicates whether they approve of each candidate. 
Based on this approval information, the goal is to select a committee of size $k$ from the candidates. JR considers every cohesive coalition of voters, which is a group of size at least $n/k$ that all approve the same candidate. A committee then satisfies JR if at least one member of every such cohesive coalition approves of some candidate in the committee. Proportional Justified Representation (PJR) introduced by \citet{sanchez2017proportional} extends JR by requiring further that that larger coalitions with more agreement will have more representation in the committee. Many other variations on justified representation have also been studied, including (but not limited to) Extended Justified Representation \citep{aziz2017justified}, Full Justified Representation \citep{peters2021proportional}, Full Proportional Justified Representation \citep{kalayci2025full}. Like JR, positional representation guarantees representation for every sufficiently large  ``coalition'' of metrics that all rank an alternative $a$ above a position $r$, with proportionally more representation for larger coalitions. Unlike in JR, in positional representation there are no external voters; rather, the metrics serve as both the voters and the candidates. Furthermore (and also unlike JR), metrics do not indicate whether they approve of other metrics -- instead, whether representation is deserved is based solely on the rankings.

There is also a line of work on proportionally fair clustering that uses similar notions of coalitions of size $n/k$ \citep{chen2019proportionally,micha2020proportionally,caragiannis2024proportional}. These works differ from our setting in that there is no notion of ranking representation. Another major difference between these works and our setting is that our set of selected metrics must be a subset of all metrics, while  clustering algorithms are generally able to choose any points as the centers of the cluster.

\section{Full Greedy Algorithm}\label{app:greedy_alg}

During the algorithm, we assign labels (or ``colors'') to metrics. We use the set of natural numbers as labels, and let $c$ represent the lowest unused natural number. $C_i$ is the set of labels assigned to metric $i$. At any point in time, $S_j$ represents the set of metrics that have already approved alternative $j$ but have not yet been assigned a $j$-label.

\begin{algorithm}[htb]
\caption{Greedy for positional representation}\label{algo:greedy_general}
\begin{algorithmic}[1]
\STATE \textbf{Input:} Preference profile $\sigma_N$, group size $g$
\STATE Initialize $S_j \gets \emptyset$ for all $j \in [m]$, $C_i \gets \emptyset$ for all $i \in [n]$, and $c \gets 1$
\FOR{each $r \in [m]$}
    \FOR{each $i \in [n]$}
        \STATE Let $j \gets \sigma_{ir}$
        \STATE Add $i$ to $S_j$
        \IF{$|S_j| = g$}
            \FOR{each $i' \in S_j$}
                \STATE Add $c$ to $C_{i'}$
            \ENDFOR
            \STATE $c \gets c + 1$
            \STATE $S_j \gets \emptyset$
        \ENDIF
    \ENDFOR
\ENDFOR
\STATE Let $C \gets \{1, 2, \ldots, c-1\}$, $K \gets \emptyset$
\WHILE{$C \neq \emptyset$} \label{line:whileloop}
    \STATE Select $i \gets \arg\max_{i'} |C_{i'}|$
    \STATE Add $i$ to $K$
    \FOR{each $x \in C_i$}
        \FOR{each $i' \in [n]$}
            \IF{$x \in C_{i'}$}
                \STATE Remove $x$ from $C_{i'}$
            \ENDIF
        \ENDFOR
        \STATE Remove $x$ from $C$
    \ENDFOR
\ENDWHILE
\STATE \textbf{Return} $K$
\end{algorithmic}
\end{algorithm}

\section{Proof of Theorem \ref{thm:greedy}}\label{proof:thm:greedy}

\begin{proof}[Proof of Theorem \ref{thm:greedy}]
First, map each label to a unique color. Greedy adds one metric to the set $K$ in each iteration of the loop on Line \ref{line:whileloop} and this loop ends once all colors are covered. To bound the number of iterations, we analyze how many colors are not yet covered by $K$ at each step.

Let $\alpha = n / g$, and let $Q_t$ be the number of uncovered colors after $t$ iterations of the loop on Line \ref{line:whileloop} of Algorithm \ref{algo:greedy}. Initially, $Q_0 = m \lfloor \frac{n}{g} \rfloor \le  m \cdot \alpha$. Each color is covered by exactly $g$ metrics and is covered at most once by each metric, so if there are strictly more than $s \cdot \alpha$ colors remaining for any integer $s \geq 1$, there must exist a metric covering at least $s + 1$ uncovered colors. In other words, if there are $Q_t$ colors remaining after iteration $t$, then the next metric chosen at iteration $t+1$ must contain at least $\lceil Q_t / \alpha \rceil$ distinct colors. This gives the following recurrence: 
\[
Q_{t+1} \leq Q_t - \lceil Q_t / \alpha \rceil \leq Q_t \cdot (1 - 1 / \alpha).
\]  
Because $Q_0 \le m \cdot \alpha$, this means that  
\begin{equation}\label{eq:qrecurse}
Q_t \leq m \cdot \alpha \cdot (1 - 1 / \alpha)^t.
\end{equation}
Once $Q_t \leq \alpha$, every metric chosen by the algorithm will still contain at least one new uncovered color, so for $Q_t \le \alpha$ we have that
\[
    Q_{t+1} \le Q_t - 1.
\]
This means that once $Q_t \le \alpha$, the loop will finish in at most $\alpha$ additional steps.

Now, we will upper bound the number of steps until $Q_t \leq \alpha$. By Equation \eqref{eq:qrecurse}, we have that $Q_t \leq \alpha$ for any $t$ satisfying  
\[
m \cdot \alpha \cdot (1 - 1 / \alpha)^t \leq \alpha.
\]  
Solving and simplifying this equation gives that $Q_t \leq \alpha$ for any $t$ satisfying  
\[
t \geq \log(m) / \log(\alpha / (\alpha - 1)).
\]  
Note that  
\[
\log(\alpha / (\alpha - 1)) = \log(\alpha) - \log(\alpha - 1) = \int_{\alpha-1}^\alpha 1/xdx \ge 1/\alpha. 
\]  
Combining the previous two equations, we have that $Q_t \le \alpha$ for any $t$ satisfying   
\[
t \geq \alpha \cdot \log(m).
\]  
As we argued above, the algorithm will only add at most $\alpha$ more metrics once $Q_t \le \alpha$, therefore the total number of metrics added to $K$ by Greedy is at most 
\[
\alpha + \alpha \cdot \log(m) = O\left(\frac{n}{g}\log(m)\right)
\]
as desired.
\end{proof}

\section{Proof of Theorem \ref{thm:pos_rep_lower_bound}}\label{proof:thm:pos_rep_lower_bound}

\begin{proof}[Proof of Theorem \ref{thm:pos_rep_lower_bound}]
    Fix any $g \ge 2$. Choose any $m$ and $n$ such that $\frac{n}{g} \ge 3$ is an integer and such that
    \[
        m = 2\binom{n}{g}.
    \]
    Construct $\bsigma_N$ as follows. First,  enumerate all $\binom{n}{g}$ distinct subsets of $N$ of size $g$ as $\{G_1,...,G_{\binom{n}{g}}\}$.  We will assign $2$ distinct alternatives to each of the first $\binom{n}{g}$ ranking places. Let $\{a_r, b_r\}$ be the alternatives assigned to rank $r$ for $r \in [1:\binom{n}{g}]$. Let $\sigma_{ir} = a_r$ if $i \in G_r$ and $\sigma_{ir} = b_r$ if $i \not\in G_r$. Therefore, alternative $a_r$ will be ranked $r$ by exactly $g$ metrics while alternative $b_r$ will be ranked $r$ by exactly $n - g$ metrics. This means that only alternatives in $\{a_r,b_r\}$ will be ranked in position $r$ for any of the metrics.

Now set the rest of the rankings (for positions $\binom{n}{g}+1$ through  $m$) of the metrics arbitrarily in any valid way. This will not matter for the rest of the proof.

Define $\alpha = n/g$. Next, we will show that no set of metrics can satisfy positional representation for group size $g$ in this example with $K$ having size less than $(\alpha-1)g + 1$. Proof by contradiction. Suppose we have a set $K$ such that $|K| \le (\alpha-1)g$ and $K$ satisfies positional representation for group size $g$. Then there must be a set of $g$ metrics not included in $K$. Denote this set of $g$ metrics as $G$. Because we used every possible subset of $N$ for the permutations of $a_r,b_r$ in the first $\binom{n}{g}$ positions, there is some position $\hat{r}$ and some alternative $a_{\hat{r}}$ such that $a_{\hat{r}}$ is ranked exactly $\hat{r}$ in every metric in $G$ and such that $a_{\hat{r}}$ does not appear ranked in the top $\hat{r}$ in any metric not in $G$. In order for $K$ to satisfy positional representation for group size $g$, at least one of the rankings in $G$ must be included in $K$, which is a contradiction.

Therefore, we must have that any $K$ satisfying positional representation for group size $g$ must satisfy $|K| > (\alpha-1)g$.

Furthermore, 
\[
    \log(m)  = \log\left(2\binom{n}{g}\right) \le \log(2n^g) = g\log(n) + \log(2) \le 2g\log(n).
\] 
The previous equation implies that $g \ge \frac{\log(m)}{2\log(n)}$. Using this on the third line below, we have that for any $K$ satisfying positional representation,
\begin{align*}
    |K| &\ge (\alpha-1)g \\
    &= \left(\frac{n}{g} - 1\right)g \\
    &\ge \frac{(n/g-1)\log(m)}{2\log(n)} \\
    &\ge \frac{\frac{n}{2g}\log(m)}{2\log(n)} \\
    &= \frac{\frac{n}{g}\log(m)}{4\log(n)}. \numberthis \label{eq:first_bound_on_K}
\end{align*}
The last step is to upper bound $\log(n)$. By construction, 
\[
m = 2\binom{n}{g} \ge 2\left(\frac{n}{g}\right)^{g} =2\alpha^{n/\alpha}. 
\]
Taking the log of both sides,
\[
\log(m) \ge \log(2) + \frac{n}{\alpha}\log(\alpha),
\]
which simplifies to
\[
    n \le \frac{\alpha(\log(m)  - \log(2))}{\log(\alpha)}.
\]
Taking a log of both sides again gives
\[
\log(n) \le \log \left( \frac{\alpha(\log(m)  - \log(2))}{\log(\alpha)}\right) \le \log(\alpha \log(m)) = \log\left(\frac{n}{g}\log(m)\right). 
\]
Combining this with Equation \eqref{eq:first_bound_on_K},  we have the desired result that for any $K$ satisfying positional representation,
\[
    |K| \ge \frac{\frac{n}{g}\log(m)}{4\log\left(\frac{n}{g}\log(m)\right)}.
\]
 \end{proof}

\section{Proof of Theorem \ref{thm:prob_method_pos_prop}}\label{proof:thm:prob_method_pos_prop}

    We use the probabilistic method. Suppose we select $K$ by choosing exactly $\frac{1}{\epsilon^2}\log(2m)$ random metrics from $N$ one-by-one. Consider any fixed alternative $a$ and position $r$. Let $X_1,...,X_{|K|}$ be indicator random variables where $X_i$ is 1 if $a$ is ranked in the top $r$ by the $i$th chosen metric in $K$. Then we have that $\E[X_i] = \frac{C(N,r,a)}{|N|}$ for all $i$. We will use the following version of Hoeffding's inequality for random variables chosen without Replacement

\begin{lemma}[Hoeffding's without replacement \cite{bardenet2015concentration}]\label{lemma:hoeffdings}
    Let \( X = (x_1, \ldots, x_N) \) be a finite population of \( N \) real numbers, and let \( X_1, \ldots, X_n \) be a random sample drawn \emph{without replacement} from \( X \). Define:
\[
a = \min_{1 \leq i \leq N} x_i, \quad b = \max_{1 \leq i \leq N} x_i, \quad \mu = \frac{1}{N} \sum_{i=1}^N x_i.
\]
Then, for all \( \varepsilon > 0 \),
\[
\mathbb{P} \left( \left| \frac{1}{n} \sum_{i=1}^n X_i - \mu \right| \geq \varepsilon \right) \leq 2\exp\left( -\frac{2n\varepsilon^2}{(b - a)^2} \right).
\]
\end{lemma}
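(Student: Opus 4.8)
The plan is to run the standard Chernoff (exponential-moment) method, with the crux being a reduction from sampling \emph{without} replacement to sampling \emph{with} replacement. The guiding principle is that a without-replacement sample is at least as concentrated as a with-replacement sample, so the exponential moment of the without-replacement sum is dominated by that of the i.i.d.\ with-replacement sum, after which the classical Hoeffding bound applies verbatim and yields exactly the stated constant $2/(b-a)^2$. Write $S = \sum_{i=1}^n X_i$ for the without-replacement sum, and let $T = \sum_{i=1}^n Y_i$, where $Y_1, \dots, Y_n$ are i.i.d.\ uniform draws from $\{x_1, \dots, x_N\}$ (with replacement). Then $\E[X_i] = \E[Y_i] = \mu$, so $\E[S] = \E[T] = n\mu$.

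First I would bound the upper tail $\mathbb{P}(S - n\mu \ge n\varepsilon)$. For $s > 0$, the exponential Markov inequality gives $\mathbb{P}(S - n\mu \ge n\varepsilon) \le e^{-sn\varepsilon}\,\E[e^{s(S-n\mu)}]$. The reduction step (discussed below) supplies $\E[e^{s(S-n\mu)}] \le \E[e^{s(T-n\mu)}]$, and since the $Y_i$ are independent this factorizes as $\prod_{i=1}^n \E[e^{s(Y_i - \mu)}]$. Because each $Y_i \in [a,b]$ has mean $\mu$, the classical Hoeffding lemma for a single bounded variable gives $\E[e^{s(Y_i-\mu)}] \le e^{s^2(b-a)^2/8}$, so $\mathbb{P}(S - n\mu \ge n\varepsilon) \le \exp(-sn\varepsilon + n s^2 (b-a)^2/8)$. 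Choosing $s = 4\varepsilon/(b-a)^2$ minimizes the exponent and yields $\mathbb{P}(S - n\mu \ge n\varepsilon) \le \exp(-2n\varepsilon^2/(b-a)^2)$. Running the identical argument on the negated population $(-x_1, \dots, -x_N)$ controls the lower tail by the same quantity, and a union bound over the two tails contributes the factor $2$, finishing the proof once the reduction is in hand.

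The reduction I rely on is Hoeffding's convex-domination theorem: for every continuous convex $\phi$, $\E[\phi(S)] \le \E[\phi(T)]$; instantiating $\phi(t) = e^{s(t-n\mu)}$ recovers the moment inequality used above. To prove it, I would realize the without-replacement sample as the first $n$ entries of a uniformly random permutation of the population, and exploit exchangeability to express $S$ as a convex-order smoothing of $T$: concretely, one exhibits a martingale coupling under which $S = \E[T \mid \mathcal{F}]$ (equivalently, averages $T$ over the relevant symmetrizations in Hoeffding's original argument), and then conditional Jensen passes the expectation through the convex $\phi$ to give $\E[\phi(S)] = \E[\phi(\E[T\mid\mathcal F])] \le \E[\phi(T)]$.

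I expect this convex-domination step to be the main obstacle, since the Chernoff bookkeeping, the single-draw Hoeffding lemma, and the negation trick for the lower tail are all routine. The delicate part is establishing that the without-replacement sum is convexly dominated by the with-replacement sum\,---\,i.e.\ constructing the coupling or averaging that exhibits the domination in the non-independent permutation setting. As an alternative that avoids the explicit domination, I could form the Doob martingale $M_k = \E[S \mid X_1, \dots, X_k]$ and apply an Azuma-type bounded-difference inequality; the increments lie in an interval of length at most $b-a$, giving $\sum_k c_k^2 = n(b-a)^2$ and hence the same exponent $-2n\varepsilon^2/(b-a)^2$, but bounding the per-step martingale differences for a permutation sample is itself the technical heart of that route.
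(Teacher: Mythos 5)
The paper does not prove this lemma at all: it is imported as a black box from \cite{bardenet2015concentration}, where it in turn goes back to Hoeffding's 1963 paper, so there is no internal proof to compare against. Your reconstruction is the classical argument and its quantitative bookkeeping is correct: the Chernoff step with the single-draw Hoeffding lemma gives the exponent $-sn\varepsilon + ns^2(b-a)^2/8$, the choice $s = 4\varepsilon/(b-a)^2$ yields exactly $-2n\varepsilon^2/(b-a)^2$, and the negation trick plus a union bound correctly produces the factor $2$. The convex-domination reduction you invoke is precisely Hoeffding's Theorem~4 (1963), which is indeed how this inequality is classically derived.

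One caution on the step you yourself flag as the technical heart: phrasing the domination as ``there exists a coupling with $S = \E[T \mid \mathcal{F}]$'' is, by Strassen's theorem, \emph{equivalent} to the convex order $\E[\phi(S)] \le \E[\phi(T)]$ you are trying to establish, so deducing the domination from the mere existence of such a coupling is circular unless you construct the coupling explicitly\,---\,and the obvious constructions fail (e.g., drawing a random permutation $Z$ and setting $Y_j = Z_{I_j}$ for i.i.d.\ indices $I_j$ does not give i.i.d.\ population draws when $n < N$). The non-circular routes are Hoeffding's original direct argument, which averages over assignments and uses convexity without ever producing a coupling, or your Azuma/Doob fallback: there the martingale increments of $M_k = \E[S \mid X_1,\dots,X_k]$ can be shown to lie in intervals of length at most $(b-a)(N-n)/(N-k) \le b-a$, which recovers the stated bound and, following Serfling (1974), even a sharper one with an extra factor of roughly $1 - (n-1)/N$ in the exponent. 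So the proposal is a correct outline of the standard proof of a result the paper merely cites, with the caveat that the coupling justification as written needs to be replaced by one of these non-circular arguments.
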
    
    Applying Lemma \ref{lemma:hoeffdings}, we have that
    \begin{align*}
        \Pr\left(\left|\frac{C(N, r, a)}{|N|} - \frac{C(K, r, a)}{|K|}\right| > \epsilon \right) &=\Pr\left(\left|\frac{1}{|K|} \sum_{i=1}^{|K|} X_i - \frac{C(N,r,a)}{|N|}\right| > \epsilon \right)  \\
        &\le 2e^{-2|K|\epsilon^2} \\
        &= 2e^{-2\log(2m)} \\
        &= \frac{1}{2m^2}.
    \end{align*}
    By a union bound over all $m^2$ combinations of $a$ and $r$, we then have that
    \[
    \Pr\left(\exists a, r : \left|\frac{C(N, r, a)}{|N|} - \frac{C(K, r, a)}{|K|}\right| > \epsilon \right) \le m^2 \cdot \frac{1}{2m^2} = 1/2.
    \]
    We can then take the complement of the event above to observe that $K$ satisfies positional proportionality with probability at least $\tfrac{1}{2}$. This probability is positive, so there must exist a $K$ with $|K| \leq \frac{1}{\epsilon^2}\log(2m)$ that satisfies positional proportionality.

\section{Proof of Theorem \ref{thm:pos_prop_lower_bound}}\label{proof:thm:pos_prop_lower_bound}

\begin{proof}[Proof of Theorem \ref{thm:pos_prop_lower_bound}]
    
We will prove this using the probabilistic method. We will show that there exists a random generation process for $\bsigma_N$ such that with non-$0$ probability, no $K \subseteq N$ with $|K| \le \frac{1}{288\epsilon^2}\log(m)$ satisfies $\epsilon$-positional proportionality.

First, we will choose $n$ and $m$ such that $m$ is even and $n$ and $m$ are sufficiently large so that the following three equations hold:
\begin{equation}\label{eq:first_eps_bound}
\frac{\log(m)\log(n+1)}{288\epsilon^2}  - \frac{\sqrt{m}}{60} <  -\log(2)
\end{equation}
\begin{equation}\label{eq:second_eps_bound}
 \frac{\log(m)\log(n+1)}{288\epsilon^2}  + \log(m) -2n\epsilon^2 < -\log(2)
\end{equation}
\begin{equation}\label{eq:third_eps_bound}
    1/\epsilon \le \log_2\left(30\sqrt{m}\right).
\end{equation}
Consider the following random generation process for $\bsigma_N$. For metric $i \in [n]$, the ranking for metrics $i$ will be generated as follows. With probability $1/2$, metric $i$ will rank alternative $1$ in the first position and alternative $2$ in the second position, and with probability $1/2$ metric $i$ will rank alternative $2$ in the first position and alternative $1$ in the second position. Repeat this process for all subsequent pairs of odd/even positions. So for all $j \in [\tfrac{m}{2}]$, with probability $1/2$ metric $i$ will  rank alternative $2j$ in the $2j$ position and alternative $2j+1$ in the $2j+1$ position, and with probability $1/2$ metric $i$ will rank alternative $2j+1$ in the $2j$ position and alternative $2j$ in the $2j+1$ position. 

We formalize this process as follows. For every $j \in [\tfrac{m}{2}]$ and $i \in [1:n]$, let $X_{ij} \sim \text{Bernoulli}(0.5)$. If $X_{ij} = 0$, then metric $i$ ranks alternative $2j$ in the $2j$ position and alternative $2j+1$ in the $2j+1$ position. If $X_{ij} = 1$, then metric $i$ ranks alternative $2j+1$ in the $2j$ position and alternative $2j$ in the $2j+1$ position. 

We will now show that with positive probability, no $K \subseteq N$ with $|K| \le \frac{1}{288\epsilon^2}\log(m)$ will satisfy $\epsilon$-positional proportionality for the random $\bsigma_N$ generated as described above.

First, we define an event $E$, which corresponds to the event that for all $j$, approximately $1/2$ of the metrics rank alternative $2j$ above alternative $2j+1$ and approximately $1/2$ of the metrics rank alternative $2j+1$ above alternative $2j$. Formally, define
\[
    E := \left\{ \forall j\in [\tfrac{m}{2}],  \left| \frac{1}{n}\sum_{i =1}^n (X_{ij} - 0.5) \right| \le \epsilon \right\}.
\]
By Hoeffding's inequality and a union bound,
\begin{align*}
    \Pr(\neg E) &=   \Pr\left(\exists j\in [\tfrac{m}{2}],  \left| \frac{1}{n}\sum_{i =1}^n  (X_{ij} - 0.5)  \right| > \epsilon \right) \\
    &\le  \sum_{j=1}^{m/2} \Pr \left(\left| \frac{1}{n}\sum_{i =1}^n  (X_{ij} - 0.5)  \right| >  \epsilon \right) && \text{[Union Bound]}\\
    & \le m e^{-2n\epsilon^2}. && \text{[Hoeffding's Inequality]} \numberthis \label{eq:bound_on_Ej}
\end{align*}
Now consider any subset $K \subseteq N$ with $|K| \le \frac{1}{288\epsilon^2}\log(m)$. We will lower bound the probability that this $K$ does not satisfy positional proportionality.
{\footnotesize
\begin{align*}
    &\Pr(K \text{ does not satisfy $\epsilon$-positional proportionality}) \\
    &= \Pr\left(\exists a \in A, r \in [m] : \left| \frac{C(N,r,a)}{n} - \frac{C(K,r,a)}{|K|}\right| > \epsilon\right)\\
    &= \Pr\left(\exists j \in [\tfrac{m}{2}] : \left|\frac{1}{|K|}\sum_{i \in K} X_{ij} - \frac{1}{n}\sum_{i =1}^n X_{ij} \right| > \epsilon\right)\\
    &\ge \Pr\left(  \left\{ \forall j\in [\tfrac{m}{2}],  \left| \frac{1}{n}\sum_{i =1}^n X_{ij}  - 0.5\right| \le \epsilon \right\} \bigcap \left\{\exists j \in [\tfrac{m}{2}] : \left|\frac{1}{|K|}\sum_{i \in K}  (X_{ij} - 0.5)  \right| > 2\epsilon\right\}\right) && \text{[$\triangle$-ineq.]} \\
    &= \Pr\left( E  \bigcap \left\{\exists j \in [\tfrac{m}{2}] : \left|\frac{1}{|K|}\sum_{i \in K}  (X_{ij} - 0.5)  \right| > 2\epsilon\right\}\right)  \\
    &\ge \Pr\left( \exists j \in [\tfrac{m}{2}] : \left|\frac{1}{|K|}\sum_{i \in K}  (X_{ij} - 0.5)  \right| > 2\epsilon\right) - \Pr(\neg E) && \text{[$\Pr(A \cap B) \ge \Pr(A) - \Pr(\neg B)$]} \\
    &\ge \Pr\left( \exists j \in [\tfrac{m}{2}] : \left|\frac{1}{|K|}\sum_{i \in K}  (X_{ij} - 0.5)  \right| > 2\epsilon\right) - me^{-2n\epsilon^2} && \text{[Equation \eqref{eq:bound_on_Ej}]} \\ 
    &= 1 - \Pr\left(\forall j \in [\tfrac{m}{2}] : \left|\frac{1}{|K|}\sum_{i \in K}  (X_{ij} - 0.5)  \right| \le 2\epsilon\right) - me^{-2n\epsilon^2} \\
    &= 1 - \prod_{j=1}^{m/2} \Pr\left( \left|\frac{1}{|K|}\sum_{i \in K}  (X_{ij} - 0.5)  \right| \le 2\epsilon\right) - me^{-2n\epsilon^2} && \text{[Ind. of $X_{ij}$]} \\
    &\ge 1 - \left(1 -  \frac{1}{30\sqrt{m}}\right)^{m/2} - me^{-2n\epsilon^2} && \text{[Lemma \eqref{lemma:inverse_chernoff}]}\\
    &\ge 1 - \left(e^{-\frac{1}{30\sqrt{m}}}\right)^{m/2} - me^{-2n\epsilon^2} && \text{[$1+x \le e^x$]}\\
    &= 1 - e^{-\frac{\sqrt{m}}{60}} - e^{\log(m) -2n\epsilon^2}. \numberthis \label{eq:bound_one_K}
\end{align*}
}
Using the above equation, we can bound the probability that there exists a $|K|$ with size less than $\frac{1}{288\epsilon^2}\log(m)$ that satisfies positional proportionality as follows:
\begin{align*}
    &\Pr(\exists K \subseteq N \text{ s.t. $|K| \le \frac{1}{288\epsilon^2}\log(m)$ and $K$ satisfies $\epsilon$-positional proportionality}) \\
    &\le \sum_{K \subseteq N, |K| \le \frac{1}{288\epsilon^2}\log(m)} \Pr(K \text{ satisfies $\epsilon$-positional proportionality}) && \text{[Union Bound]} \\
    &\le \sum_{K \subseteq N, |K| \le \frac{1}{288\epsilon^2}\log(m)} \left(e^{-\frac{\sqrt{m}}{60}} + e^{\log(m) -2n\epsilon^2} \right) && \text{[Eq. \eqref{eq:bound_one_K}]} \\
    &\le (n+1)^{\frac{1}{288\epsilon^2}\log(m)} \left(e^{-\frac{\sqrt{m}}{60}} +  e^{\log(m) -2n\epsilon^2}\right) \\
    &= e^{\frac{\log(m)\log(n+1)}{288\epsilon^2}} \left(e^{-\frac{\sqrt{m}}{60}} +  e^{\log(m) -2n\epsilon^2}\right) \\
    &= \text{Exp}\left( \frac{\log(m)\log(n+1)}{288\epsilon^2}  - \frac{\sqrt{m}}{60}\right) + \text{Exp}\left( \frac{\log(m)\log(n+1)}{288\epsilon^2}  + \log(m) -2n\epsilon^2\right) \\
    &<  \text{Exp}\left(-\log(2)\right) + \text{Exp}\left( -\log(2)\right) && \text{[Eqs \eqref{eq:first_eps_bound} and \eqref{eq:second_eps_bound}]}\\
    &= 1/2 + 1/2 \\
    &= 1.
\end{align*}
Therefore, we have shown that with positive probability, there will be no $K$ with size $|K| \le \frac{1}{288\epsilon^2}\log(m)$ that satisfies $\epsilon$-positional proportionality. Finally, we can conclude that there \emph{must} exist some ranking $\bsigma_N$ such that no $K$ with size $|K| \le \frac{1}{288\epsilon^2}\log(m)$ satisfies $\epsilon$-positional proportionality.

In the equations above, we used the following inverse of Hoeffding's inequality.

\begin{lemma}\label{lemma:inverse_chernoff}
    Using the notation and setting of the proof of Theorem \ref{thm:pos_prop_lower_bound} above, for any $K$ satisfying $|K| \le \frac{1}{288\epsilon^2}\log(m)$ and any $j \in [\tfrac{m}{2}]$,
    \[
     \Pr\left( \left|\frac{1}{|K|}\sum_{i \in K} (X_{ij} - 0.5) \right| > 2\epsilon\right) \ge \frac{1}{30\sqrt{m}}.
    \]
\end{lemma}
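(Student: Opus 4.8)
The plan is to recognize the probability in the lemma as an anti-concentration (reverse-Hoeffding) statement for a fair-coin sum and to bound it below by an explicit tail estimate. Fix $j \in [\tfrac m2]$ and a subset $K$ with $|K| = k \le \frac{1}{288\epsilon^2}\log(m)$. For this fixed $j$ the variables $\{X_{ij}\}_{i \in K}$ are independent $\mathrm{Bernoulli}(1/2)$, so $S := \sum_{i \in K} X_{ij}$ is distributed as $\mathrm{Bin}(k,1/2)$ and $\frac{1}{|K|}\sum_{i\in K}(X_{ij} - 0.5) = \frac{S - k/2}{k}$. Thus the event in the lemma is exactly $\{\,|S - k/2| > 2\epsilon k\,\}$, and it suffices to lower bound $\Pr(S \ge \ell)$, where $\ell$ is the smallest integer strictly exceeding $k/2 + 2\epsilon k$. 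Since $\Pr(S \ge \ell) \ge \Pr(S = \ell) = \binom{k}{\ell}2^{-k}$, everything reduces to an explicit lower bound on a single binomial coefficient.

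I would split into two regimes according to the bound \eqref{eq:third_eps_bound}, namely $1/\epsilon \le \log_2(30\sqrt m)$. In the first regime, $k \le 1/\epsilon$, I use the ``all-heads'' outcome: $\Pr(S = k) = 2^{-k}$, and since $\epsilon \le 1/24 < 1/4$ this outcome satisfies $|S - k/2| = k/2 > 2\epsilon k$. Because $k \le 1/\epsilon \le \log_2(30\sqrt m)$ by \eqref{eq:third_eps_bound}, we obtain $2^{-k} \ge 2^{-\log_2(30\sqrt m)} = \frac{1}{30\sqrt m}$, which already gives the claim. This step is clean and is precisely the reason condition \eqref{eq:third_eps_bound} is imposed.

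In the second regime, $1/\epsilon < k \le \frac{1}{288\epsilon^2}\log(m)$, I estimate $\binom{k}{\ell}2^{-k}$ via a Stirling lower bound on the near-central binomial coefficient. Writing $\ell/k = 1/2 + \delta$, the rounding of $\ell$ gives $\delta < 2\epsilon + 1/k < 3\epsilon \le 1/8$ (using $1/k < \epsilon$ in this regime and $\epsilon \le 1/24$), and a standard Stirling estimate yields $\binom{k}{\ell}2^{-k} \ge \frac{c}{\sqrt k}\exp(-C\delta^2 k)$ for universal constants $c, C$ (one may take $C$ close to $2$). Hence $\Pr(S = \ell) \ge \frac{c}{\sqrt k}\exp(-9C\epsilon^2 k)$, and substituting $k \le \frac{1}{288\epsilon^2}\log(m)$ gives $\exp(-9C\epsilon^2 k) \ge \exp\!\left(-\tfrac{9C}{288}\log m\right) = m^{-9C/288}$, where $\tfrac{9C}{288} < \tfrac12$ for the relevant constant. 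Since $\frac{1}{\epsilon^2} = O((\log m)^2)$ by \eqref{eq:third_eps_bound}, we also have $k = O((\log m)^3)$, so the factor $m^{-9C/288}$ dominates $\frac{(\log m)^{3/2}}{\sqrt m}$ once $m$ is large, yielding $\Pr(S = \ell) \ge \frac{1}{30\sqrt m}$.

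The hard part will be the non-asymptotic bookkeeping in the second regime: one must pin down the Stirling constant $C$, confirm that $\tfrac{9C}{288} < \tfrac12$ so the exponential factor genuinely beats $m^{-1/2}$, and check that the remaining polynomial-in-$\log m$ prefactor is absorbed once $m$ is taken sufficiently large (which the proof of Theorem~\ref{thm:pos_prop_lower_bound} permits, since $n$ and $m$ are chosen ``sufficiently large''). Secondary care is needed for the rounding of $\ell$ — in the edge case where $k/2 + 2\epsilon k$ is an integer one shifts $\ell$ up by one, perturbing $\delta$ by only $1/k < \epsilon$ — and for verifying $\delta \le 1/8$ so that the Stirling bound applies in the stated form.
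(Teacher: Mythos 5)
Your proof is correct and follows the same two-regime skeleton as the paper: both split on whether $|K| \le 1/\epsilon$, and your first regime (the all-heads outcome combined with Equation \eqref{eq:third_eps_bound}) is verbatim the paper's argument. The difference is in the second regime: the paper cites a reverse Chernoff bound (Proposition 7.3.2 of \cite{matouvsek2001probabilistic}, $\Pr(X \ge n/2 + t) \ge \tfrac{1}{30}e^{-16t^2/n}$ for integer $t \le n/8$), applies it with $t = 3|K|\epsilon$ after absorbing the rounding via $2|K|\epsilon + 0.5 \le 3|K|\epsilon$, and lands exactly on $\tfrac{1}{30}e^{-144|K|\epsilon^2} \ge \tfrac{1}{30\sqrt m}$ with no further slack needed. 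You instead derive the anti-concentration from scratch by lower-bounding a single binomial coefficient via Stirling. This is more self-contained and avoids the citation, but it buys you a $c/\sqrt{k}$ prefactor and an unpinned constant $C$, so you only recover the stated $\tfrac{1}{30\sqrt m}$ by demanding $m$ be large enough to absorb a $\mathrm{poly}(\log m)$ factor into the gap $m^{1/2 - 9C/288}$. That is legitimate here because the proof of Theorem \ref{thm:pos_prop_lower_bound} is free to choose $m$ large, but it means the lemma would need a fourth largeness condition alongside Equations \eqref{eq:first_eps_bound}--\eqref{eq:third_eps_bound}, and the clean constant $\tfrac{1}{30}$ in the statement is no longer doing any work — whereas in the paper it is inherited directly from the cited proposition. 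Your bookkeeping caveats (integrality of $\ell$, $\delta \le 1/8$, $\ell \le k$) are all resolvable and you have identified them correctly.
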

\begin{proof}[Proof of Lemma \ref{lemma:inverse_chernoff}]
We will prove this separately for small $|K|$ and for large $|K|$.

If $|K| \le 1/\epsilon$, then
\begin{align*}
    \Pr\left( \left|\frac{1}{|K|}\sum_{i \in K}  (X_{ij} - 0.5)  \right| > 2\epsilon\right) &\ge \Pr \left(\frac{1}{|K|}\sum_{i \in K} X_{ij}   = 1\right) && \text{[$\epsilon \le 1/24$]}\\
    &= 2^{-|K|} \\
    &\ge 2^{-1/\epsilon} \\
    &\ge \frac{1}{30\sqrt{m}}, && \text{Equation \eqref{eq:third_eps_bound}}
\end{align*} 
which is the desired result.

To prove the desired result when $|K| > 1/\epsilon$, we will use the following proposition from  \cite{matouvsek2001probabilistic}:

\begin{proposition}[Proposition 7.3.2 of \cite{matouvsek2001probabilistic}]\label{prop:matouvsek}

Let $X_1,...,X_n$ be independent Bernoulli random variables with probability $1/2$ of being $0$ or $1$. Let $X = X_1 + ... + X_n$. Then for any integer $t \in [0,n/8]$,
\[  
    \Pr(X \ge n/2 + t] \ge 1/30e^{-16t^2/n}.
\]
\end{proposition}

We can apply this proposition to our problem in the following way. If $|K| > 1/\epsilon$, then 
\begin{align*}
    &\Pr\left( \left|\frac{1}{|K|}\sum_{i \in K}  (X_{ij} - 0.5)  \right| > 2\epsilon\right) \\
    &= \Pr\left( \left|\sum_{i \in K}  (X_{ij} - 0.5)  \right| > 2|K|\epsilon \right)  \\
    &\ge \Pr\left( \left|\sum_{i \in K}  (X_{ij} - 0.5)  \right| \ge 2|K|\epsilon + 0.5 \right)  \\
    &\ge \Pr\left( \left|\sum_{i \in K}  (X_{ij} - 0.5)  \right| \ge 3|K|\epsilon \right) && \text{[$|K| > 1/\epsilon$]} \\
    &\ge \Pr\left( \sum_{i \in K}  X_{ij} \ge \frac{|K|}{2} +  3|K|\epsilon \right)  \\
    &\ge \frac{1}{30}e^{-16\left(3|K|\epsilon\right)^2/(|K|)} && \text{[Prop \ref{prop:matouvsek} with $t = 3|K|\epsilon$]}\\
    &= \frac{1}{30}e^{-144|K|\epsilon^2} \\
    &\ge \frac{1}{30}e^{-144\frac{1}{288\epsilon^2}\log(m)\epsilon^2} \\
    &= \frac{1}{30}e^{-\log(m)/2} \\
    &= \frac{1}{30\sqrt{m}}, \numberthis \label{eq:inverse_chernoff}
\end{align*}
which is the desired result. Note that we could apply Proposition \ref{prop:matouvsek} because $3|K|\epsilon \le |K|/8$ for $\epsilon \le 1/24$.

\end{proof}
\end{proof}

\section{Proof of Theorem \ref{thm:approx_scoring_rules}}\label{proof:thm:approx_scoring_rules}

\begin{proof}
If a subset $K$ satisfies $\epsilon$ positional proportionality for set $N$, then 
\[
        \left|\frac{C(N, r, a)}{n} - \frac{C(K, r, a)}{|K|}\right| \le \epsilon \quad \forall r, a.
\]
Consider a scoring rule with scoring vector $s$, i.e. that gives score $s_r$ to an alternative in position $r$. Define $\Delta_r = s_r - s_{r+1}$ (where $s_{m+1} = 0$). Then by definition of scoring rule, we have that
            \[
                f_s(a, \sigma_N) = \frac{1}{n} \sum_{i=1}^n s_{\sigma_i(a)} =  \frac{1}{n}\sum_{r=1}^m C(N, r, a) \Delta_r.
            \]
            and
            \[
             f_s(a, \sigma_K) = \frac{1}{|K|} \sum_{i\in K} s_{\sigma_i(a)} =  \frac{1}{|K|} \sum_{r=1}^m  C(K, r, a) \Delta_r.
            \]

            Combining this with the first equation, we have that 
            \begin{align*}
                \left| f_s(a, \sigma_N)  -  f_s(a, \sigma_K) \right| &\le  \sum_{r=1}^m \left|\frac{C(N, r, a)}{n}- \frac{C(K, r, a)}{|K|} \right|\Delta_r \\
                &\le \sum_{r=1}^m \epsilon\Delta_r \\
                &= \epsilon \sum_{r=1}^m \left(s_r - s_{r+1}\right) \\
                &= \epsilon \left(s_1 - s_{m+1}\right) \\
                &= \epsilon,
            \end{align*}
            where the last line follows from the fact that WLOG $s_1 = 1$.
\end{proof}

\section{General Greedy Algorithm}

\begin{algorithm}[htb]
\caption{Greedy for General Representation}\label{algo:greedy}
\begin{algorithmic}[1]
\STATE \textbf{Input:} Preference profile $\sigma_N$, collection of groups $\mathcal{G}$, group size $g$
\STATE Initialize $S \gets \emptyset$, $C_i \gets \emptyset$ for all $i \in [n]$, and $c \gets 1$
\FOR{each $G \in \mathcal{G}$}
    \FOR{each $i \in [n]$}
        \IF{$i \in G$}
            \STATE Add $i$ to $S$
        \ENDIF
        \IF{$|S| = g$}
            \FOR{each $i' \in S$}
                \STATE Add $c$ to $C_{i'}$
            \ENDFOR
            \STATE $c \gets c + 1$
            \STATE $S \gets \emptyset$
        \ENDIF
    \ENDFOR
\ENDFOR
\STATE Let $C \gets \{1, 2, \ldots, c-1\}$, $K \gets \emptyset$
\WHILE{$C \neq \emptyset$}
    \STATE Select $i \gets \arg\max_{i'} |C_{i'}|$
    \STATE Add $i$ to $K$
    \FOR{each $x \in C_i$}
        \FOR{each $i' \in [n]$}
            \IF{$x \in C_{i'}$}
                \STATE Remove $x$ from $C_{i'}$
            \ENDIF
        \ENDFOR
        \STATE Remove $x$ from $C$
    \ENDFOR
\ENDWHILE
\STATE \textbf{Return} $K$
\end{algorithmic}
\end{algorithm}

\newpage

\section{NP-Hardness Proofs}

\subsection{Proof of Theorem \ref{thm:rep_np_hard}}\label{proof:thm:rep_np_hard}
\begin{proof}
    Suppose we have an instance of set cover, i.e. a collection of sets $\mathcal{C} = \{C_1,...,C_{\kappa}\}$ for which we want to find the smallest set that contains every distinct element in the sets of $\mathcal{C}$. Let $U = \bigcup_{C \in \mathcal{C}} C$. 
    We want to map this to an instance of our problem, which consists of an $N$, $\mc{G}$, and $g$ for which we would like to find the smallest subset $K$ that satisfies general representation. 
    
    The key idea of the proof is to construct an instance of our problem where every group in $\mathcal{G}$ has the same size. Every element in $u \in U$ will have a corresponding group $G_u$ in $\mathcal{G}$. We start with each $G_u$ initialized to be the indices of the elements in $\mathcal{C}$ that contain $u$. However, this results in the $G_u$ having different sizes. We then add new tasks to $N$ that are only contained in a single group $G_u$ until all groups in $\mathcal{G}$ have the same size.
    
    Formally, we construct $N$, $\mathcal{G}$, and $g$ as follows:
    
\begin{algorithm}[htb]
\caption{Constructing $N, \mathcal{G}, g$}\label{algo:rep_np_hard}
\begin{algorithmic}[1]
\STATE \textbf{Input:} $U, \{C_i\}_{i \in [\kappa]}$
\STATE $G_u  \gets \{i: u \in C_i\}$ for all $u \in U$
\STATE $g \gets \max_{u} |G_u|$
\STATE $i \gets  \kappa$
\FOR{$u \in U$}
    \WHILE{$|G_u| < g$}
        \STATE $i \gets i+1$
        \STATE Add $i$ to $G_u$
    \ENDWHILE
\ENDFOR
\STATE \textbf{Return} $\mathcal{G} = \{G_u\}_{u \in U}$, $N = [i]$, and $g$
\end{algorithmic}
\end{algorithm}
By construction, $|G_u| = g$ for all $u$. 

    We will now show that the set cover instance $(U, \mathcal{C})$ has a solution of size less than or equal to $k$ if and only if there exists a solution of size less than or equal to $k$ that satisfies general representation for $N,\mathcal{G},g$. Suppose the set cover instance has a solution $\tilde{\mathcal{C}}$ with $|\tilde{\mathcal{C}}| \le k$. Then this $\tilde{\mathcal{C}}$ covers every element in $U$ at least once. By construction, this $K = \{i \le \kappa : C_i \in \tilde{\mathcal{C}}\} $ is a solution to the general representation problem, because general representation for group size $g$ where every group in $\mathcal{G}$ has size $g$ requires that $|G_u \cap K| \ge 1$ for all groups $G_u \in \mathcal{G}$. 

    For the opposite direction, suppose we have some $K \subseteq N$ with $|K| \le k$ that satisfies general representation. Define $K'$ as in Algorithm \ref{algo:construct_Kprime}.
        
\begin{algorithm}[htb]
\caption{Constructing $K'$}\label{algo:construct_Kprime}
\begin{algorithmic}[1]
\STATE \textbf{Input:} $K$
\STATE $K' \gets K \cap [\kappa]$
\FOR{$i \in K \setminus [\kappa]$}
    \STATE $u \gets u$ such that $i \in G_u$
    \STATE $i' \gets $ any element of $[\kappa]$ such that $u \in C_{i'}$ \label{line:iprime}
    \IF{$i' \not\in K'$} 
        \STATE $K' \gets K' \cup \{i'\}$
    \ENDIF
\ENDFOR 
\STATE \textbf{Return} $K'$
\end{algorithmic}
\end{algorithm}
Note that at the end of this algorithm, $|K'| \le |K|$ because we never add an $i'$ to $K'$ unless there is a corresponding $i \in K \setminus [\kappa]$ that is not in $K'$. Furthermore, $K'$ still covers every group $G_u$, because the $i'$ selected in Line \ref{line:iprime} covers the $G_u$ that was previously covered by $i$ in $K$. Note that such an $i'$ always exists because every $u \in U$ is in at least one subset in $\mathcal{C}$.

Therefore, we have that $K'$ still covers every group in $G_u$ and $|K'| \le |K| \le k$. This means that $\tilde{\mathcal{C}} = \{C_i : i \in K'\}$ is a solution to the set cover instance with size less than or equal to $k$.
\end{proof}

\subsection{Proof of Theorem \ref{thm:prop_np_hard}}\label{proof:thm:prop_np_hard}
\begin{proof}
    We will follow a similar proof structure to the proof of Theorem 1 in \cite{natarajan1995sparse} on the NP-hardness of sparse approximate solutions to linear equations. As in \cite{natarajan1995sparse}, we will reduce from the problem of ``exact cover by $3$ sets''. 

    Exact Cover by 3-Sets takes as input a set $S$ with $|S| = \tau$ and a collection $\mathcal{C} = \{C_1,...,C_{\kappa}\}$ of subsets of $S$ such that $|C_i| = 3$ for all $i \le \kappa$, and the goal is to determine whether or not there exists a subset of $\mathcal{C}$ (denoted $\tilde{\mathcal{C}}$) such that $\tilde{\mathcal{C}}$ covers every element in $S$ exactly once.  

    We will next describe how to map an instance of Exact Cover by 3-Sets to an instance of a decision version of our problem. Specifically, we will consider the problem where we are given $N$, $\mathcal{G}$, and $\epsilon$ and must decide whether there exists a subset $K \subseteq N$ of size $\tau/3$ that satisfies $\epsilon$-general proportionality. 
    
    Suppose we have an instance of Exact Cover by 3-Sets $(S, \mathcal{C})$. Let $\epsilon = \frac{1}{\tau}$ and let $\mathcal{G} = \{G_s\}_{s \in S} $ for $G_s$ constructed as follows. We will construct $G_s$ so that $|G_s| = 2\kappa$ for all $s \in S$. We will further maintain that for every $C_i$, we have $i \in G_s$ for all $s \in C_i$ and that every $i > \kappa$ appears in at most two groups $G_s$.

    We formally construct $\mc{G}$ in Algorithm \ref{algo:Cprime}.

    \begin{algorithm}[htb]
    \caption{Constructing $\mathcal{G}$}\label{algo:Cprime}
    \begin{algorithmic}[1]
    \STATE \textbf{Input:} $S$, $\mathcal{C} = \{C_i\}_{i \in [\kappa]}$
    \STATE $G_s \gets \{i : s \in C_i\}$ for all $s \in S$
    \STATE $i \gets  \kappa $
    \WHILE{$\exists s \in S : |G_s| < 2\kappa$}
        \STATE $i \gets i+1$
        \IF{there is exactly one $s \in S$ such that $|G_s| < 2\kappa$}
            \STATE $s \gets s \in S$ such that $|G_s| < 2\kappa$
           \STATE Add $i$ to $G_s$
        \ELSE
            \STATE $s_1,s_2 \gets$ two distinct values of $s$ such that $|G_s| < 2\kappa$
            \STATE Add $i$ to $G_{s_1}$
            \STATE Add $i$ to $G_{s_2}$
        \ENDIF
    \ENDWHILE
    \STATE \textbf{Return} $\mathcal{G} = \{G_s\}_{s \in S}$
    \end{algorithmic}
    \end{algorithm}

    Note that the counter $i$ never exceeds $\kappa\tau$. To see this, 
    observe that the max value of $i$ is equal to 
    \begin{align*}
    \kappa + \left\lceil \frac{\sum_{s \in S} \left(2\kappa - \{i : s \in C_i\}|\right)}{2} \right\rceil &= \kappa + \left\lceil \tau\kappa - \frac{1}{2}\sum_{s \in S}| \{i  : s \in C_i\}| \right\rceil  \\
    &= \tau\kappa + \kappa - \left\lceil \frac{1}{2}\sum_{s \in S}| \{i : s \in C_i\}| \right\rceil  \\
    &= \tau\kappa + \kappa - \left\lceil 1.5\kappa \right\rceil  \\
    &\le \tau\kappa.
    \end{align*}
    Therefore, we choose $N = [\kappa \tau]$ so that $G_s \subseteq N$ as required.
    
    By construction, we also have that $|G_s| = 2\kappa$ for all $G_s \in \mathcal{G}$, so for all $G_s \in \mathcal{G}$,
    \[
        \frac{|G_s|}{|N|} = \frac{2\kappa}{\kappa\tau} = \frac{2}{\tau}.
    \]

    We  now show that the Exact Cover by 3-Sets instance has a solution if and only if there exists a $K$ with $|K| \leq \tau/3$ which satisfies $\epsilon$-general proportionality for our instance. Suppose the Exact Cover by 3-Sets has a solution $\tilde{\mathcal{C}} \in \mathcal{C}$. Then $|\tilde{\mathcal{C}}| = \tau/3$. Define the set $K := \{i : C_i \in \tilde{\mathcal{C}}\}$. This $K$ must cover every $s \in S$ exactly once, which by construction of $G_s$ means this $K$ covers every $G_s \in \mathcal{G}$ exactly once. Therefore, we have that $\frac{|K \cap G_s|}{|K|} = \frac{1}{|K|} = \frac{3}{\tau}$ for all $s$. This satisfies
    \[
       \left|  \frac{|K \cap G_s|}{|K|} - \frac{|G_s|}{|N|}\right| = \frac{1}{\tau} = \epsilon,
    \] 
    so $K$ satisfies $1/\tau$-general proportionality.

    To show the other direction, suppose we have a set $K$ with size $|K| \le \tau/3$ that satisfies $1/\tau$-general proportionality. Because we chose $\epsilon = \frac{1}{\tau}$ and every $G_s$ satisfies $|G_s|/N = 2/\tau$, we must have  that $|G_s \cap K| \ge 1$ for all $s$. However, we also know that by construction, any $i \in N$ covers at most three groups $G_s$. This implies that we must have $|K| \ge \tau/3$. Since by assumption $|K| \le \tau/3$, we therefore must have that $|K| = \tau/3$. Furthermore, the only elements of $N$ that cover $3$ groups in $\mathcal{G}$ are the elements $i \in [\kappa]$. Therefore, if $|K| = \tau/3$ and $K$ includes at least one of every group in $\mathcal{G}$, we must have that $K \subseteq [\kappa]$. Finally, combining the facts that every element of $K$ covers exactly three groups, $|K| = \tau/3$, the number of groups is $\tau$, and every group is covered at least once by $K$, we must have that every group is covered exactly once by $K$.

    Therefore, we can conclude that $\tilde{\mathcal{C}} = \{C_i : i \in K\}$ must be a solution to the Exact Cover by 3-Sets problem as desired.
\end{proof}

\section{More Details on Empirical Case Studies}

Table \ref{tab:case_study_params} summarizes the parameters for each case study.

\begin{table}[!ht]
    \centering
    \begin{tabular}{lccc}
    \toprule
        \textbf{Case study} & $n$ & $m$ & $k$ for existing subset \\
        \midrule
        BIG-bench & 141 & 120 & 24 \\
        HELM & 34 & 67 & 7 \\
        Cal Hospital Compare & 50 & 282 & 12\\
    \bottomrule
    \end{tabular}
    \caption{Summary of case study parameters.}
    \label{tab:case_study_params}
\end{table}

\subsection{Case Study Descriptions and Datasets}
We describe each case study in more detail below.

\paragraph{Case Study 1: BIG-bench \citep{srivastava2022beyond}.} We consider the problem of selecting tasks to include in a ``Lite'' version of BIG-bench . The BIG-bench repository already includes such a subset of tasks called BIG-bench LITE, which includes 24 tasks. BIG-bench in general includes both JSON tasks and programmatic tasks, where JSON tasks are more lightweight to evaluate. Their existing LITE benchmark includes only JSON tasks for ease of evaluation. Thus, for the purposes of this case study, we consider the problem of selecting a subset of the JSON tasks to go in a LITE benchmark. Intuitively, one might want to select the LITE tasks to be representative in some sense of the rest of the JSON tasks, since all tasks are included in the published leaderboards.

\textit{Alternatives:} The BIG-bench repository includes evaluations on its JSON tasks for LLMs of different sizes from three model families: ``Big-G'', ``Big-G sparse'', and ``GPT''. For each individual model, they also include 0-shot, 1-shot, 2-shot, and 3-shot evaluations. For simplicity, we treat each model and each shot count as a separate alternative to be ranked. Thus, we end up with $m=120$ alternatives.  

\textit{Full set:} The full set of tasks we consider consists of $n=141$ JSON tasks, after filtering to include only the tasks which were evaluated for all alternatives. 
We consider only JSON tasks in the full set as the existing BIG-bench LITE only includes JSON tasks for ease of evaluation. The list of all tasks is included in the code in the Supplementary Materials. The metric used was always the \texttt{preferred\_score} field per task.

\textit{Existing subset:} We compare to the existing BIG-bench LITE (BBL) set of tasks, which includes $k=24$ JSON tasks as of February, 2025. The list of these tasks is included in the code in the Supplemenary Materials and also available at \url{https://github.com/google/BIG-bench/blob/main/bigbench/benchmark_tasks/keywords_to_tasks.md#big-bench-lite}.

\textit{Dataset:} Data was accessed using the Big-bench API available at \url{https://github.com/google/BIG-bench}. Code for processing this data is included with the Supplementary Materials. Usage is in compliance with the Apache License, Version 2.0.

\paragraph{Case Study 2: HELM \citep{liang2022holistic}.} HELM Classic is another evaluation platform that ranks LLMs based on multiple scenarios and metrics. We consider the problem of selecting a subset of scenarios which is in some sense ``representative'' of the full set. HELM Lite is an existing variant which includes significantly fewer scenarios than HELM Classic. Note that it is not directly stated that HELM Lite is meant to be representative of HELM Classic (which includes both Core scenarios and other evaluations), but we make this assumption for the purposes of this illustration.

\textit{Alternatives:} We consider $m=67$ models that appeared on the HELM Classic leaderboard as of February, 2025. 

\textit{Full set:} The full set of tasks we consider consists of the accuracy metrics for $n=34$ scenarios, for which data was posted on the HELM Classic leaderboard. This includes both ``Core scenarios'' and ``Targeted evaluations.'' A full list of these scenarios is included with the code in the Supplementary Materials.

\textit{Existing subset:} Not all tasks from HELM Lite were included in the HELM Classic leaderboard, and not all models on the HELM Classic leaderboard were evaluated on the HELM Lite leaderboard. Thus, as our existing subset baseline, we selected the subset of HELM Classic scenarios which were also included as HELM Lite scenarios. 

This yielded a set of $k=7$ scenarios, which included: \textit{NarrativeQA, NaturalQuestions (open-book), NaturalQuestions (closed-book), OpenbookQA, MMLU (Massive Multitask Language Understanding), MATH}, and \textit{GSM8K (Grade School Math)}.

\textit{Dataset:} Data was downloaded as HTML tables in February, 2025 from the following links:
\begin{itemize}
    \item \url{https://crfm.stanford.edu/helm/classic/latest/#/leaderboard/core_scenarios}
    \item \url{//crfm.stanford.edu/helm/classic/latest/#/leaderboard/targeted_evaluations}
    \item \url{https://crfm.stanford.edu/helm/lite/latest/#/leaderboard/core_scenarios}
\end{itemize}

Leaderboard data can also be accessed at \url{https://github.com/stanford-crfm/helm}. Only accuracy metrics were considered. Code for processing this data is included with the Supplemental Materials. Usage is in compliance with the Apache License, Version 2.0.

\paragraph{Case Study 3: Cal Hospital Compare \citep{calhospitalcompare}.} Finally, we demonstrate how our methods can apply in evaluation settings beyond LLMs by considering the problem of hospital quality evaluation. Cal Hospital Compare is a platform that awards a ``patient safety honor roll'' status to hospitals in California that perform particularly well in a set of quality measures collected from the Centers for Medicare and Medicaid Services (CMS). The honor roll selection procedure considers 12 hospital quality measures drawn from the CMS Hospital Compare database, which collects hundreds of measures from hospitals throughout the US. A continuing challenge is selecting these 12 quality measures, which Cal Hospital Compare identifies as a ``an 18-month, multistakeholder process of rigorously evaluating existing national patient safety measures.''\footnote{\url{https://calhospitalcompare.org/wp-content/uploads/2025/04/FactSheet_Patient-Safety-Honor-Roll-List_Cal-Hospital-Compare_2025-1.pdf}} For the purposes of this illustration, we consider the problem of selecting a set of quality measures which is ``representative'' of existing patient safety measures available in the CMS Hospital Compare database. According to the ``algorithmic approach,'' Cal Hospital Compare awards honor roll status to eligible hospitals ``with two-thirds of their measures
above the 50th percentile of good performance (and none below the 10th percentile).''

\textit{Alternatives:} We consider $m=282$ hospitals in California which were eligible for algorithmic evaluation according the criteria specified by Cal Hospital Compare. Specifically, they had scores for at least 6 of the currently selected 12 measures.

\textit{Full set:} We consider a full set of $n=50$ patient safety measures available through CMS Hospital Compare. These were selected to include only the measures that were directly related to the categories from which the original 12 measures were selected, so as not to include measures unrelated to patient safety. A full list of these measures is included in the code submission. It is possible that some important measures were missed in this illustration, and any real practical application should bring in additional domain expertise to carefully tailor the full set to precise practical goals.

\textit{Existing subset:} We compare against the existing $k=12$ measures currently used by Cal Hospital Compare for their patient safety honor roll. A full list of these measures is shown in Figure \ref{fig:chc_subset}.

\begin{figure}[!ht]
    \centering
    \includegraphics[width=0.8\linewidth]{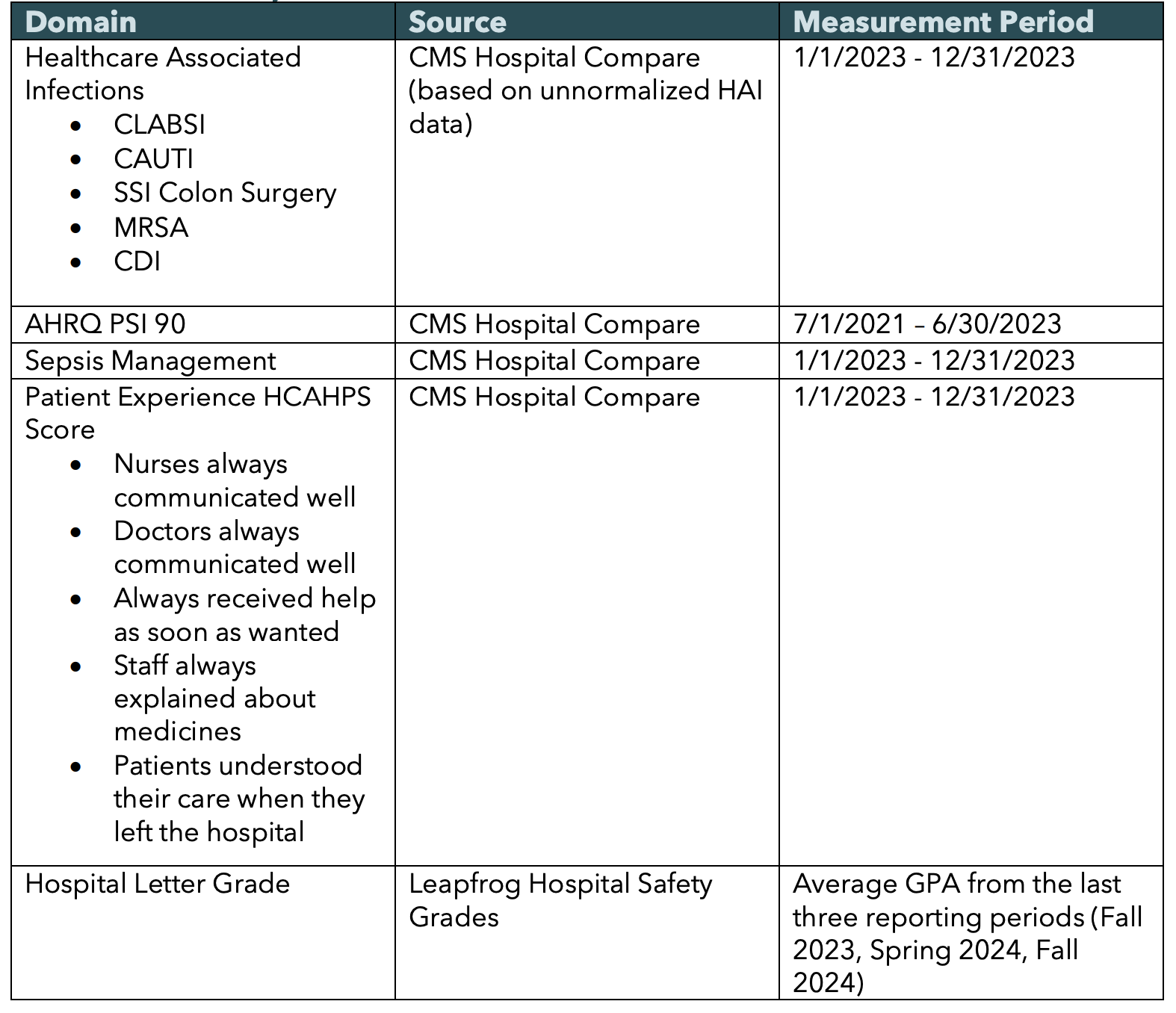}
    \caption{Table of Patient Safety Honor Roll Measures published by Cal Hospital Compare (\url{https://calhospitalcompare.org/wp-content/uploads/2025/04/FactSheet_Patient-Safety-Honor-Roll-List_Cal-Hospital-Compare_2025-1.pdf}). As our ``existing subset'', we consider the set of $k=12$ measures selected from CMS Hospital Compare.}
    \label{fig:chc_subset}
\end{figure}

\textit{Dataset:} Data was downloaded as CSV files in February, 2025 from the CMS Hospital Compare provider data repository (\url{https://data.cms.gov/provider-data/}). A full directory of all available datasets can be found at \url{https://data.cms.gov/provider-data/dataset/dgmq-aat3#data-dictionary}. The specific datasets used were:
\begin{itemize}
    \item \begin{verbatim}Healthcare_Associated_Infections-Hospital.csv\end{verbatim}
    \item \begin{verbatim}Complications_and_Deaths-Hospital.csv\end{verbatim}
    \item \begin{verbatim}Timely_and_Effective_Care-Hospital.csv\end{verbatim}
    \item \begin{verbatim}HCAHPS-Hospital.csv\end{verbatim}
    \item \begin{verbatim}Maternal_Health-Hospital.csv\end{verbatim}
    \item \begin{verbatim}Unplanned_Hospital_Visits-Hospital.csv\end{verbatim}
\end{itemize}

This data is part of the public domain.
Code to aggregate and process these datasets is included with the Supplemental Materials. 

\subsection{Integer program computation}

Integer programs were solved using CPLEX. A maximum solve time of 10 minutes was set for each integer program. Thus, it is possible that the integer programming solutions were suboptimal when this limit was reached. This limit was only reached on BIG-bench. 

All experiments were run on a MacBook Pro with an Intel Core i7. 

\end{document}